\documentclass{article}
\usepackage{arxiv}

\usepackage{times}

\usepackage{microtype}
\usepackage{graphicx}
\usepackage{subfigure}
\usepackage{booktabs} 
\usepackage{xcolor}
\usepackage{paralist}

\usepackage{natbib}
\usepackage{epstopdf}
\usepackage{algorithm}
\usepackage[noend]{algorithmic}

\usepackage{amssymb}
\usepackage{amsmath}
\usepackage{amsfonts}
\usepackage{array}
\usepackage[draft]{hyperref}
\usepackage{tabularx}

\usepackage[inline]{enumitem}
\usepackage{dL}
\usepackage{logic}
\usepackage{listings}
\usepackage{xspace}
\usepackage{subfigure}
\usepackage{algorithm}
\usepackage{textcomp}
\usepackage{stmaryrd}
\usepackage{amsthm}
\usepackage{todonotes}

\usepackage{prettyref}
\newcommand{\rref}[2][]{\prettyref{#2}}
\newrefformat{model}{Model\,\ref{#1}}
\newrefformat{listing}{Listing\,\ref{#1}}
\newrefformat{alg}{Algorithm\,\ref{#1}}
\newrefformat{line}{line\,\ref{#1}}
\newrefformat{sec}{Section\,\ref{#1}}
\newrefformat{appendix}{Appendix\,\ref{#1}}
\newrefformat{app}{Appendix\,\ref{#1}}
\newrefformat{def}{Definition\,\ref{#1}}
\newrefformat{thm}{Theorem\,\ref{#1}}
\newrefformat{ax}{\ref{#1}}
\newrefformat{prop}{Proposition\,\ref{#1}}
\newrefformat{lemma}{Lemma\,\ref{#1}}
\newrefformat{cor}{Corollary\,\ref{#1}}
\newrefformat{ex}{Example\,\ref{#1}}
\newrefformat{tab}{Table\,\ref{#1}}
\newrefformat{fig}{Figure\,\ref{#1}}
\newrefformat{eqn}{Equation~(\ref{#1})}
\newrefformat{problem}{Problem\,\ref{#1}}

\newtheorem{example}{Example}
\newtheorem{cor}{Corollary}

\newtheorem{theorem}{Theorem}
\newtheorem{theorem*}{Theorem}
\newtheorem{lemma}{Lemma}

\newtheorem{assumption}{Assumption}

\usepackage{mathrsfs}
\newcommand{\mname}{\texttt{VSRL}\xspace}

\newcommand{\den}[1]{\llbracket#1\rrbracket}
\newcommand{\assignm}{\leftarrow}
\newcommand{\assign}{\(\assignm\)}
\newcommand{\sinit}{\mathcal{S}_{init}}
\newcommand{\mc}{\mathcal}
\newcommand{\E}[2][]{\mathop{\mathbb{E}_{#1}} \left[ #2 \right]}

\begin{document}

\title{Verifiably Safe Exploration for End-to-End Reinforcement Learning}

\author{%
    Nathan Hunt$^1$, Nathan Fulton$^2$, Sara Magliacane$^2$, Nghia Hoang$^2$, Subhro Das$^2$, Armando Solar-Lezama$^1$\thanks{The authors acknowledges support from the MIT-IBM Watson AI Lab. The email addresses of the authors are: nhunt@mit.edu, \{nathan, sara.magliacane, nghiaht, subhro.das\}@ibm.com, asolar@csail.mit.edu. } \\
    $^1$ Massachusetts Institute of Technology \\ 
    $^2$ MIT-IBM Watson AI Lab, IBM Research
}

\maketitle

\begin{abstract}
Deploying deep reinforcement learning in safety-critical settings requires developing algorithms that obey hard constraints during exploration.
This paper contributes a first approach toward enforcing formal safety constraints on end-to-end policies with visual inputs.
Our approach draws on recent advances in object detection and automated reasoning for hybrid dynamical systems.
The approach is evaluated on a novel benchmark that emphasizes the challenge of safely exploring in the presence of hard constraints. Our benchmark draws from several proposed problem sets for safe learning and includes problems that emphasize challenges such as 
reward signals that are not aligned with safety constraints.
On each of these benchmark problems, our algorithm completely avoids unsafe behavior while remaining competitive at optimizing for as much reward as is safe. 
We also prove that our method of enforcing the safety constraints preserves all safe policies from the original environment. 
\end{abstract}

\section{Introduction}
\label{sec:intro}
Deep reinforcement learning algorithms \citep{sutton.barto:reinforcement} are effective at learning, often from raw sensor inputs, control policies that optimize for a quantitative reward signal.
Learning these policies can require experiencing millions of unsafe actions.
Even if a safe policy is finally learned -- which will happen only if the reward signal reflects all relevant safety priorities -- providing a purely statistical guarantee that the optimal policy is safe requires an unrealistic amount of training data \citep{RANDDriveToSafety}.
The difficulty of establishing the safety of these algorithms makes it difficult to justify the use of reinforcement learning in safety-critical domains where industry standards demand strong evidence of safety prior to deployment \citep{iso26262}. 

Formal verification provides a rigorous way of establishing safety for traditional control systems \citep{clarke2018handbook}. 
The problem of providing formal guarantees in RL is called \emph{formally constrained reinforcement learning (FCRL)}. 
Existing FCRL methods such as \citep{HasanbeigKroening,hasanbeig2018,Hasanbeig2019, hasanbeig2020cautious, DBLP:conf/tacas/HahnPSSTW19,DBLP:conf/aaai/AlshiekhBEKNT18,aaai18,DBLP:journals/corr/neuralsimplex,de2019foundations} combine the best of both worlds: they optimize for a reward function while safely exploring the environment.

Existing FCRL methods suffer from two significant disadvantages that detract from their real-world applicability: a) they enforce constraints over a completely symbolic state space that is assumed to be noiseless (e.g. the position of the safety-relevant objects are extracted from a simulator's internal state); b) they assume that the entire reward structure depends upon the same symbolic state-space used to enforce formal constraints.
The first assumption limits the applicability of FCRL in real-world settings where the system's state must be inferred by an imperfect and perhaps even untrusted perception system. The second assumption implies a richer symbolic state that includes a symbolic representation of the reward, which we argue is unnecessary and may require more labelled data. Furthermore, this means these approaches may not generalize across different environments that have similar safety concerns, but completely different reward structures.

The goal of this paper is to \emph{safely learn a safe policy} without assuming a perfect oracle that identifies the positions of all safety-relevant objects. I.e., unlike all existing FCRL methods, we do not rely on the internal state of the simulator.
Prior to reinforcement learning, we train an object detection system to extract the positions of safety-relevant objects up to a certain precision. 
The pre-trained object detection system is used during reinforcement learning to extract the positions of safety-relevant objects, and that information is then used to enforce formal safety constraints.
Absolute safety in the presence of untrusted perception is epistemologically challenging, but our formal safety constraints do at least account for a type of noise commonly found in object detection systems. 
Finally, although our system (called Verifiably Safe Reinforcement Learning, or \mname) uses a few labeled data to pre-train the object detection, we still learn an end-to-end policy that may leverage the entire visual observation for reward optimization.

Prior work from the formal methods community has demonstrated that you can do safe RL when you have full symbolic characterization of the environment and you can precisely observe the entire state. However, this is not realistic for actual robotic systems which have to interact with the physical world and can only perceive it through an imperfect visual system. This paper demonstrates that techniques inspired by formal methods can provide value even in this situation. First, we show that by using existing vision techniques to bridge between the visual input and the symbolic representation, one can leverage formal techniques to achieve highly robust behavior. Second, we prove that under weak assumptions on this vision system, the new approach will safely converge to an optimal safe policy. 

Our convergence result is the first of its kind for formally constrained reinforcement learning. Existing FCRL algorithms provide convergence guarantees only for an MDP that is defined over high-level symbolic features that are extracted from the internal state of a simulator.  Instead, we establish optimality for policies that are learned from the low-level feature space (i.e., images). We prove that our method is capable of optimizing for reward even when significant aspects of the reward structure are not extracted as high-level features used for safety checking. Our experiments demonstrate that \mname is capable of optimize for reward structure related to objects whose positions we do \emph{not} extract via supervised training. This is significant because it means that \mname needs pre-trained object detectors only objects that are safety-relevant.

Finally, we provide a novel benchmark suite for Safe Exploration in Reinforcement Learning that includes both environments where the reward signal is aligned with the safety objectives and environments where the reward-optimal policy is unsafe. Our motivation for the latter is that assuming reward-optimal policies respect hard safety constraints neglects one of the fundamental challenges of Safe RL: preventing ``reward-hacking". For example, it fundamentally difficult to tune a reward signal so that it has the ``correct" trade-off between a pedestrian's life and battery efficiency. 
We show that in the environments where the reward-optimal policy is safe (``reward-aligned''), \mname learns a safe policy with convergence rates and final rewards which are competitive or even superior to the baseline method. More importantly, \mname learns these policie with zero safety violations during training; i.e., it achieves perfectly safe exploration. In the environments where the reward-optimal policy is unsafe (``reward-misaligned''), \mname both effectively optimizes for the subset of reward that can be achieved without violating safety constraints and successfully avoids ``reward-hacking" by violating safety constraints.

Summarily, this paper contributes:
\textbf{(1)} \mname, a new approach toward formally constrained reinforcement learning that does not make unrealistic assumptions about oracle access to symbolic features. This approach requires minimal supervision before reinforcement learning begins and explores safely while remaining competitive at optimizing for reward.
\textbf{(2)} Theorems establishing that \mname learns safely and maintains convergence properties of any underlying deep RL algorithm within the set of safe policies.
\textbf{(3)} A novel benchmark suite for Safe Exploration in Reinforcement Learning that includes both properly specified and mis-specified reward signals.

\section{Problem Definition}
\label{sec:probdef}

A reinforcement learning (RL) system can be represented as a Markov Decision Process (MDP) $(\mc S, \mc A, T, R, \gamma)$ which includes a (possibly infinite) set $\mc S$ of system states, an action space $\mc A$, a transition function $T(s, a, s')$ which specifies the probability of the next system state being $s'$ after the agent executes action $a$ at state $s$, a reward function $R(s, a)$ that gives the reward for taking action $a$ in state $s$, and a discount factor $0 < \gamma < 1$ that indicates the system preference to earn reward as fast as possible. We denote the set of initial states as $\sinit \subseteq \mc S$. 

In our setting, $\mc S$ are images and we are given a safety specification $\texttt{safe}: \mc O \rightarrow \{0,1\}$ over a set of high-level observations $\mc O$, specifically, the positions (planar coordinates) of the safety-relevant objects in a 2D or 3D space.
Since $\mc S \not = \mc O$, it is not trivial to learn a safe policy $\pi$ 
such that $\texttt{safe}(\mc O) = 1$ along every trajectory.
We decompose this challenge into two well-formed and tractable sub-problems: 
\begin{enumerate}
    \item Pre-training a system $\psi : \mc S \rightarrow \mc O$ that converts the visual inputs into symbolic states using synthetic data (without acting in the environment);
    \item Learning policies over the visual input space $\mc S$ while enforcing safety in the symbolic state space $\mc O$.
\end{enumerate}
This problem is not solvable without making some assumptions, so here we focus on the following:
\begin{assumption}
\label{assm:eps}
The symbolic mapping $\psi$ is correct up to $\epsilon$. More precisely, the true position of every object $o_i$ can be extracted from the image $s$ through the object detector $\psi (s)_i$ so that the Euclidean distance between the actual and extracted positions is at most $\epsilon$, i.e. $\forall i \; ||\psi(s)_i - o_i||_2 \leq \epsilon$. 
We assume that we know an upper bound on the number of objects whose positions are extracted.
\end{assumption} 
\begin{assumption}
\label{assm:safe}
Initial states, described by a set of properties denoted as \texttt{init}, are safe,  i.e. $\forall s \in \sinit: \texttt{safe}(\psi(s))=1$ . Moreover, every state we reach after taking only safe actions has at least one available safe action.
\end{assumption} 
\begin{assumption}
\label{assm:plant}
We are given a dynamical model of the safety-relevant dynamics in the environment, given as either a discrete-time dynamical system or a system of ordinary differential equations, denoted as \texttt{plant}. We assume that model is correct up to simulation; i.e., if $T(s_i,a,s_j) \not = 0$ for some action $a$, then the dynamical system \textit{plant} maps  $\psi(s_i)$ to a set of states that includes $\psi(s_j)$.
\end{assumption} 

For example, the model may be a system of ODEs that describes how the acceleration and angle impact the future positions of a robot, as well as the potential dynamical behavior of some hazards in the environment. Note that this model only operates on $\mc O$ (the symbolic state space), not $\mc S$ (low-level features such as images or LiDAR).

\begin{assumption}
\label{assm:agent}
We have an abstract model of the agent's behavior, denoted as \texttt{ctrl}, that is correct up to simulation: if $T(s_i,a,s_j) \not = 0$ for some action $a$, then
$\psi(s_j)$ is one of the possible next states after $a(\psi(s_i))$ by \textit{ctrl}.
\end{assumption} 

An abstract model of the agent's behavior describes at a high-level a safe controller behavior, disregarding the fine-grained details an actual controller needs to be efficient. An example is a model that brakes if it is too close to a hazard and can have any other type of behavior otherwise. Note that \texttt{ctrl} is very different from a safe policy $\pi$, since it only models the safety-related aspects of $\pi$ without considering reward optimization.

Assumptions~\ref{assm:eps}-\ref{assm:agent} are mild and reasonable for most practical systems to satisfy.

\section{Background}
\label{sec:background}

\label{subsec:deepRL}
The goal of an RL agent represented as an MDP $(\mc S, \mc A, T, R, \gamma)$  is to find a policy $\pi$ 
that maximizes its expected total reward from an initial state $s_0 \in \sinit$ :
\begin{eqnarray}
V^{\pi}(s) &\triangleq& \mathbb{E}_{\pi}\left[\sum\nolimits_{i=0}^{\infty}\gamma^i r_i)\right] \label{eq:1}
\end{eqnarray}
where $r_i$ is the reward at step $i$.
In a deep RL setting, we can use the DNN parameters $\theta$ to parametrize $\pi(a|s; \theta)$.
One particularly effective implementation and extension of this idea is proximal policy optimization (PPO), which improves sample efficiency and stability by sampling data in batches and then optimizing a surrogate objective function that prevents overly large policy updates \citep{DBLP:journals/corr/SchulmanWDRK17}.
This enables end-to-end learning through gradient descent which significantly reduces the dependency of the learning task on refined domain knowledge.
Deep RL thus provides a key advantage over traditional approaches which were bottle-necked by a manual, time-consuming, and often incomplete feature engineering process.

To ensure formal guarantees we use differential Dynamic Logic (\dL)
\citep{DBLP:journals/jar/Platzer08,Platzer10,DBLP:conf/lics/Platzer12a,DBLP:journals/jar/Platzer17},
a logic for specifying and proving reachability properties of hybrid dynamical systems, which combine both discrete-time (e.g. a robot that decides actions at discrete times) and continuous-time dynamics (e.g. an ODE describing the position of the robot at any time). 
Hybrid systems can be described with hybrid programs (HPs), for which we give an informal definition in \rref{tab:hps}. Notably, besides the familiar program syntax, HPs are able to represent a non-deterministic choice between two programs $\alpha \cup \beta$, and a continuous evolution of a system of ODEs for an arbitrary amount of time, given a domain constraint $F$ on the state space $\{x_1'=\theta_1,...,x_n'=\theta_n \ \& \ F\}$.


Formulas of \dL are generated by the following grammar where $\alpha$ ranges over HPs:
\[
\varphi,\psi ::= f \sim g ~|~  \varphi \land \psi ~|~ \varphi \lor \psi ~|~ \varphi \limply \psi ~|~ \forall x. \varphi ~|~ \exists x. \varphi ~|~ \dibox{\alpha}\varphi
\]
where $f,g$ are polynomials over the state variables, $\phi$ and $\psi$ are formulas of the state variables, $\sim$ is one of $\{ \le, <, =, >, \ge\}$.
The formula $\dibox{\alpha}\varphi$ means that a formula $\varphi$ is true in every state that can be reached by executing the hybrid program $\alpha$.

Given a set of initial conditions \texttt{init} for the initial states, a discrete-time controller \texttt{ctrl} representing the abstract behaviour of the agent, a continuous-time system of ODEs \texttt{plant} representing the environment and a safety property \texttt{safe} we define the \emph{safety preservation problem} 
as verifying that the following holds:
\begin{equation}
\label{eq:safety}
\texttt{init} \limply \dibox{\{\texttt{ctrl};\texttt{plant}\}^*}\texttt{safe}
\end{equation}
Intuitively, this formula means that if the system starts in an intial state that satisfies \texttt{init}, takes one of the (possibly infinite) set of control choices described by \texttt{ctrl}, and then follows the system of ordinary differential equations described by \texttt{plant}, then the system will always remain in states where \texttt{safe} is true.

\begin{example}[Hello, World]
Consider a 1D point-mass $x$ that must avoid colliding with a static obstacle ($o$) and has perception error bounded by $\frac{\epsilon}{2}$. The following \dL model characterizes an infinite set controllers that are all safe, in the sense that $x \not = o$ \emph{for \textbf{all} forward time and at every point throughout the entire flow of the ODE}:
$$
\texttt{init} \limply \dibox{\{\texttt{ctrl}; t:=0; \texttt{plant}\}^*}x-o > \epsilon
$$ 
\begin{align*}
\text{where,} \qquad  \textsf{SB}(a) &\equiv 2B(x-o-\epsilon) > v^2+(a+B)*(aT^2+2Tv)) \\
\texttt{init} &\equiv \textsf{SB}(-B) \land B>0 \land T>0 A > 0 \land v \ge 0 \land \epsilon > 0 \\
\texttt{ctrl} &\equiv a := *; ?-B \le a \le A \land \textsf{SB}(a) \\
\texttt{plant} &\equiv \{ x'=v,v'=a,t'=1 \& t \le T \land v \ge 0 \}
\end{align*}
Starting from any state that satisifies the formula $\texttt{init}$, the (abstract/non-deterministic) controller chooses \textbf{any} acceleration satisfying the $\textsf{SB}$ constraint. 
After choosing any $a$ that satisfies \textsf{SB}, the system then follows the flow of the system of ODEs in \texttt{plant} for any positive amount of time $t$ less than $T$. The constraint $v \ge 0$ simply means that braking (i.e., choosing a negative acceleration) can braing the pointmass to a stop, but cannot cause it to move backwards. 

The full formula says that no matter how many times we execute the controller and then follow the flow of the ODEs, it will always be the case -- again, for an infinite set of permissible controllers -- that $x-o < \epsilon$.
\end{example}

Theorems of \dL can be automatically proven in the KeYmaera~X theorem prover \citep{DBLP:conf/cade/FultonMQVP15,bellerophon}. 
\citep{DBLP:journals/fmsd/MitschP16} explains how to synthesize action space guards from non-deterministic specifications of controllers (\texttt{ctrl}),
and \citet{aaai18,DBLP:conf/tacas/FultonP19} explains how these action space guards are incorporated into reinofrcement learning to esnure safe exploration.
Additional details about how we synthesize monitoring conditions from \dL models is available in \citep{DBLP:journals/fmsd/MitschP16} and in \rref{appendix:monitors}.

\section{\mname: Verifiably Safe RL on Visual Inputs}
\label{sec:fossil}
We present \mname, a framework that can augment any deep RL algorithm to perform \emph{safe exploration} on visual inputs. As discussed in \rref{sec:probdef}, we decompose the general problem in two tasks:
\begin{compactenum}
    \item learning a mapping of visual inputs $s$ into a symbolic state $o$ for safety-relevant properties using only a few examples (described in \rref{sec:symbolicMapping} and shown in \rref{fig:combined}a);
    \item learning policies over visual inputs, while enforcing safety in the symbolic state space (described in \rref{sec:constrainedLearning} and shown in \rref{fig:combined}c).
\end{compactenum}
This latter task requires a controller monitor, which is a function $\varphi : O \times A \rightarrow \{0, 1\}$ that classifies each action $a$ in each symbolic state $o$ as ``safe'' or not. In this paper this monitor is synthesized and verified offline following \citep{aaai18,DBLP:conf/tacas/FultonP19}. In particular, as discussed in the previous sections, the KeYmaera X theorem prover solves the safety preservation problem presented in Eq. \rref{eq:safety} for a set of high-level reward-agnostic safety properties \texttt{safe}, a system of differential equations characterizing the relevant subset of environmental dynamics \texttt{plant}, an abstract description of a safe controller \texttt{ctrl} and a set of initial conditions \texttt{init} (shown in \rref{fig:combined}b).

\begin{figure*}[t]
\centering  
\includegraphics[width=0.9\columnwidth]{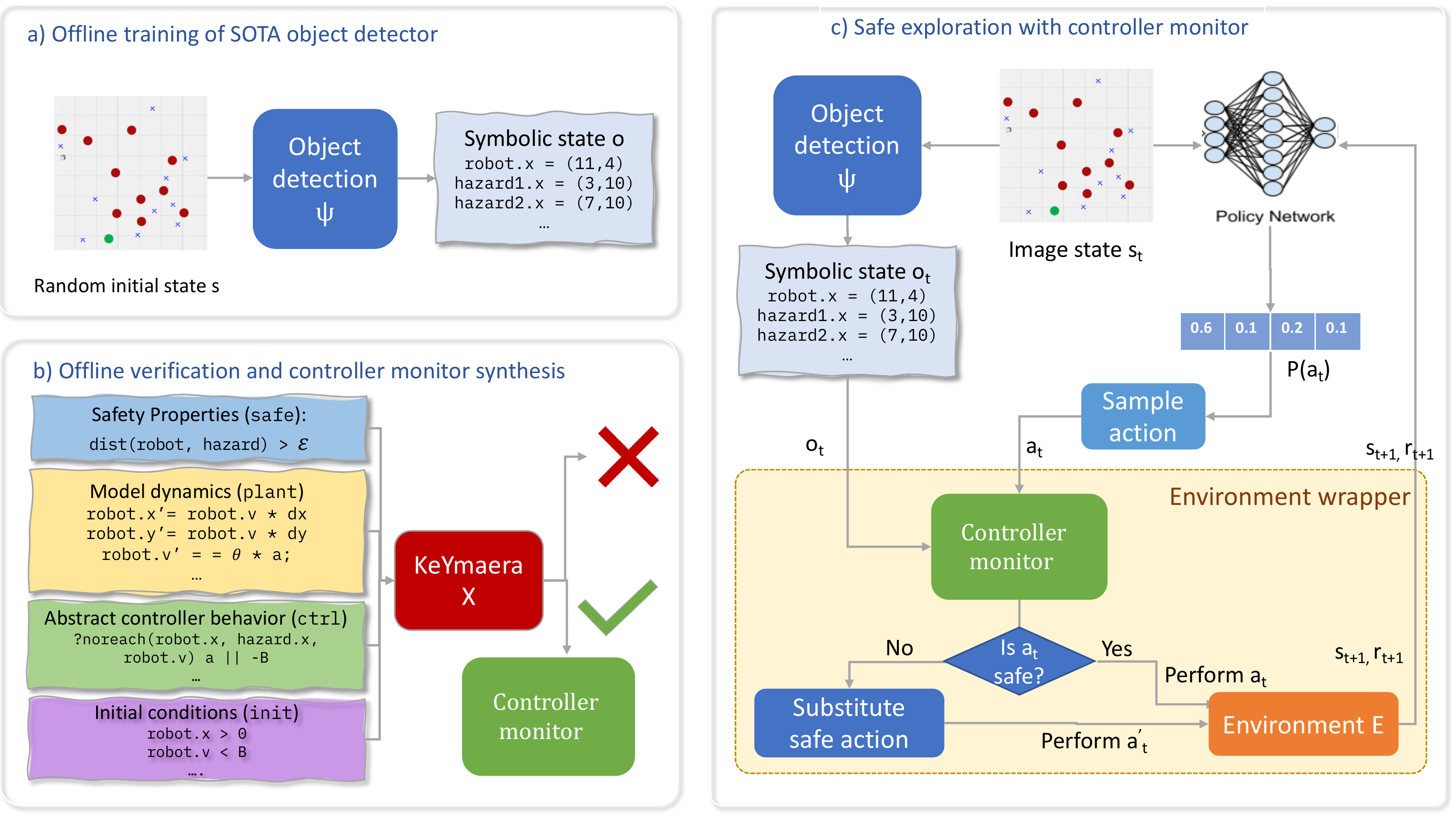}
\caption{\mname\: The left panels a) and b) represent offline pre-processing (described in \rref{sec:symbolicMapping}) and verification. The right panel c) shows how these components are used to safely explore, as described in \rref{sec:constrainedLearning}.}
\label{fig:combined}
\end{figure*}

\subsection{Object Detection}
\label{sec:symbolicMapping}

In order to remove the need to construct labelled datasets for each environment, we only assume that we are given a small set of images of each safety-critical object and a set of background images (in practice, we use 1 image per object and 1 background). We generate synthetic images by pasting the objects onto a background with different locations, rotations, and other augmentations. We then train a CenterNet-style object detector \citep{zhou2019objects_centernet} which performs multi-way classification for whether each pixel is the center of an object. For speed and due to the visual simplicity of the environments, the feature extraction CNN is a truncated ResNet18 \citep{he2016deep_resnet} which only keeps the first residual block. The loss function is the modified focal loss \citep{lin2017focal} from \cite{law2018cornernet}. See \rref{app:symbolicMapping} for full details on the object detector.
Our current implementation does not optimize or dedicate hardware to the object detector, so detection adds some run-time overhead for all environments. However, this is an implementation detail rather than an actual limitation of the approach. There are many existing approaches that make it possible to run object detectors quickly enough for real-time control.

\subsection{Enforcing Constraints}
\label{sec:constrainedLearning}

While \mname can augment any existing deep RL algorithm, this paper extends PPO \citep{DBLP:conf/icml/SchulmanLAJM15}. 
The algorithm performs RL as normal except that, whenever an action is attempted, the object detector and safety monitor are first used to check if the action is safe. If not, a safe action is sampled uniformly at random from the safe actions in the current state. This happens outside of the agent and can be seen as wrapping the environment with a safety check. Pseudocode for performing this wrapping is in \rref{alg:mname}. 
The controller monitor is extracted from a verified \dL model (see Page 3 of \citep{aaai18} for details).
A full code listing that in-lines \rref{alg:mname} into a generic RL algorithm is provided in Appendix \ref{appendix:fullcode}.

\begin{algorithm}[ht]
\caption{The \mname safety guard.} \label{alg:mname}
\begin{algorithmic}
\label{alg:mname}
\STATE \textbf{Input:} $s_t$: input image; $a_t$: input action; $\psi$: object detector; $\varphi$: controller monitor; $E = (\mc S, \mc A, R, T)$: MDP of the original environment \\
\STATE $a'_t = a_t$
\IF {$\lnot \varphi(\psi(s_t), a_t)$}
\STATE Sample substitute safe action $a'_t$ uniformly from $\{ a \in \mc A \mid \varphi(\psi(s_t), a) \}$
\ENDIF
\STATE \textbf{Return} $s_{t+1} \sim T(s_t, a'_t, \cdot)$, $r_{t+1} \sim R(s_t, a'_t)$
\end{algorithmic}
\end{algorithm}

\subsection{Safety and Convergence Results}

We establish two theoretical properties about \mname.
First, we show that \mname safely explores.
Second, we show that if \mname is used on top of an RL algorithm which converges (locally or globally) then \mname will converge to the (locally or globally) optimal safe policy. 
All proofs are in the Appendix.

\begin{theorem}[]\label{thm:safety}
If Assumptions \ref{assm:eps}-\ref{assm:agent} hold along a trajectory $s_0, a_0, s_1, a_1, \dots, a_{n-1}, s_n$ with $s_0 \in \sinit$ for a model of the environment \texttt{plant} and a model of the controller \texttt{ctrl}, where each $a_i$ is chosen based on \rref{alg:mname}, then every state along the trajectory is safe; i.e., $\forall i \ge 0, \texttt{safe}(\psi(s_i))$.
\label{thm:correctness}
\end{theorem}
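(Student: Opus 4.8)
The plan is to prove the theorem by induction on the trajectory index $i$, carrying the stronger invariant that the symbolic state $\psi(s_i)$ is \emph{reachable} in the verified hybrid program from an initial symbolic state: that is, $\psi(s_0) \models \texttt{init}$ and $(\psi(s_0),\psi(s_i)) \in \den{\{\texttt{ctrl};\texttt{plant}\}^*}$, the transition relation of the loop. This invariant yields the theorem immediately: since the safety preservation formula $\texttt{init} \limply \dibox{\{\texttt{ctrl};\texttt{plant}\}^*}\texttt{safe}$ (\rref{eq:safety}) was discharged in KeYmaera~X, it is valid by soundness of \dL, so \texttt{safe} holds in every state reachable from a state satisfying \texttt{init}; instantiating this at $\psi(s_i)$ gives $\texttt{safe}(\psi(s_i))=1$. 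For the base case, \rref{assm:safe} says exactly that every $s_0\in\sinit$ satisfies $\texttt{init}$ (and is safe), and the zero-iteration run of the loop places $(\psi(s_0),\psi(s_0))$ in the relation.

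For the inductive step, assume the invariant at step $i$. First I would check that the action executed by \rref{alg:mname} is well defined: by the invariant $\psi(s_i)$ is safe, and since every action taken so far is monitored-safe (this is what \rref{alg:mname} enforces), \rref{assm:safe} guarantees that a monitored-safe action exists, so $\{a\in\mc A \mid \varphi(\psi(s_i),a)\}\neq\emptyset$ and the executed action $a_i$ satisfies $\varphi(\psi(s_i),a_i)=1$ (it either passed the monitor as proposed, or was resampled from this nonempty set). Next, by soundness of the controller-monitor synthesis of \citep{DBLP:journals/fmsd/MitschP16} (the monitor $\varphi$ is extracted from the verified model exactly as in \citep{aaai18}), $\varphi(\psi(s_i),a_i)=1$ certifies that resolving the nondeterministic choice in \texttt{ctrl} by $a_i$ is a legal run of \texttt{ctrl} from $\psi(s_i)$. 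Then \rref{assm:plant} and \rref{assm:agent} (correctness up to simulation of \texttt{plant} and \texttt{ctrl}) apply: since $T(s_i,a_i,s_{i+1})\neq 0$, the pair $(\psi(s_i),\psi(s_{i+1}))$ is among those produced by running \texttt{ctrl} with choice $a_i$ and then \texttt{plant}, i.e. $(\psi(s_i),\psi(s_{i+1}))\in\den{\texttt{ctrl};\texttt{plant}}$. Composing this one additional iteration with the inductive hypothesis gives $(\psi(s_0),\psi(s_{i+1}))\in\den{\{\texttt{ctrl};\texttt{plant}\}^*}$, closing the induction; $\texttt{safe}(\psi(s_{i+1}))$ follows as above.

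The hard part is the interface between the three layers involved — the concrete MDP transition $T$ on images, the induced symbolic transition on $\mc O$, and the hybrid-program semantics against which the \dL theorem is proved — and in particular the role of \rref{assm:eps}. The verified model carries an $\epsilon$-margin (cf. the $\textsf{SB}$ bound in the Hello-World example), so that evaluating $\varphi$ on the noisy estimate $\psi(s_i)$ rather than on the true positions $o_i$ still returns a conservative decision, and the reachable set of \texttt{plant} from $\psi(s_i)$ still over-approximates where $\psi(s_{i+1})$ can land. Making precise which quantities the $\epsilon$-slack must be injected into so that \rref{assm:plant}--\rref{assm:agent} genuinely hold for the $\psi$-image of the real dynamics, and confirming that \rref{alg:mname}'s uniform resampling is never asked to draw from an empty set, is where care is needed; the rest is a routine induction on top of soundness of \dL and of ModelPlex monitor extraction.
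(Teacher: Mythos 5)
Your proof is correct and follows the same overall strategy as the paper's: induct along the trajectory, use the monitor plus the simulation assumptions (\rref[]{assm:plant}, \rref[]{assm:agent}) to lift each concrete transition to $(\psi(s_i),\psi(s_{i+1}))\in\den{\texttt{ctrl};\texttt{plant}}$, and then invoke the verified \dL formula via soundness. The one real difference is the choice of inductive invariant. You carry reachability, $(\psi(s_0),\psi(s_i))\in\den{\{\texttt{ctrl};\texttt{plant}\}^*}$, and apply the box modality only once at the end; the paper instead unpacks the KeYmaera~X proof into a loop invariant $J$ with $\vdash \texttt{init}\limply J$, $\vdash J\limply\texttt{safe}$, and $\vdash J\limply\dibox{\texttt{ctrl};\texttt{plant}}J$, and carries $\psi(s_i)\models J$ through the induction. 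Your version is marginally more self-contained, since it needs only the \emph{validity} of $\texttt{init}\limply\dibox{\{\texttt{ctrl};\texttt{plant}\}^*}\texttt{safe}$ (plus the fact that the Kleene-star relation absorbs one more iteration), whereas the paper's version presumes the formula was proved by loop induction so that an explicit $J$ is available; the paper's version in exchange makes the "safety is preserved one step at a time" structure syntactically explicit. Both are sound. Your closing remarks on the two delicate points -- that \rref[]{assm:safe} is what guarantees the resampling set $\{a\mid\varphi(\psi(s_t),a)\}$ is nonempty, and that the $\epsilon$ of \rref[]{assm:eps} must already be folded into the verified model so that the monitor evaluated on $\psi(s_i)$ is conservative for the true state -- identify exactly the places the paper's appendix proof also leaves implicit (it simply bundles them into its assumption \textbf{A2}), so flagging them is appropriate rather than a gap in your argument.
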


This results implies that any RL agent augmented with \rref{alg:mname} is always safe during learning. Our second theorem states that any RL agent that is able to learn an optimal policy in an environment $E$  can be combined with \rref{alg:mname} to learn a \emph{reward-optimal safe policy}.

\begin{theorem}\label{thm:policy_equivalence} 
Let $E$ be an environment and $L$ a reinforcement learning algorithm.

If $L$ converges to a reward-optimal policy $\pi^*$ in $E$,
then using \rref{alg:mname} with $L$ converges to $\pi^*_s$, the safe policy with the highest reward (i.e. the \emph{reward-optimal safe policy}).
\end{theorem}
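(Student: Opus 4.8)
The plan is to reduce the statement to the convergence hypothesis on $L$ by exhibiting Algorithm~\ref{alg:mname} as a transformation of the MDP itself. First I would define the \emph{shielded environment} $E' = (\mc S, \mc A, T', R', \gamma)$ obtained from $E$ and the safety guard: for a state $s$ and an attempted action $a$, let $\mu_{s,a}$ be the distribution on actions that is the point mass on $a$ when $\varphi(\psi(s),a)=1$, and the uniform distribution on $\{a'\in\mc A : \varphi(\psi(s),a')=1\}$ otherwise (well defined and supported on safe actions by Assumption~\ref{assm:safe}). Set $T'(s,a,\cdot) = \mathbb{E}_{a'\sim\mu_{s,a}}[T(s,a',\cdot)]$ and $R'(s,a) = \mathbb{E}_{a'\sim\mu_{s,a}}[R(s,a')]$. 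Since $\mu_{s,a}$ depends only on $s$ and $a$, $E'$ is again an MDP over the same state and action spaces with the same discount factor. The first step is then the observation that running $L$ on $E$ wrapped by Algorithm~\ref{alg:mname} produces exactly the same sequence of observations, actions, rewards and next states, in distribution, as running $L$ directly on $E'$ --- this is immediate because the wrapper is applied ``outside the agent''. Hence the convergence property assumed of $L$ yields that $L$ with Algorithm~\ref{alg:mname} converges to a reward-optimal policy $\pi'$ of $E'$.

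The second step relates policies of $E'$ to safe policies of $E$ through the \emph{induced-policy} map $\pi \mapsto \bar\pi$, where $\bar\pi(\cdot\mid s) = \mathbb{E}_{a\sim\pi(\cdot\mid s)}[\mu_{s,a}]$ is the distribution over the actions actually executed in $E$. I would establish three facts. (i) For every $\pi$ and every $s$, $V^\pi_{E'}(s) = V^{\bar\pi}_E(s)$: a coupling/induction on trajectory length showing that the law of $(s_0,r_0,s_1,r_1,\dots)$ under $\pi$ in $E'$ equals its law under $\bar\pi$ in $E$, since one step of each produces the same joint distribution on (executed action, reward, next state). (ii) Every induced policy $\bar\pi$ selects only monitor-approved actions, so by Theorem~\ref{thm:safety} (with Assumption~\ref{assm:safe}) $\bar\pi$ is safe in $E$. (iii) Conversely, every safe policy $\sigma$ of $E$ --- understood as one selecting only monitor-approved actions --- satisfies $\bar\sigma = \sigma$, because the guard never intervenes on it. Together (ii) and (iii) say that $\{\bar\pi : \pi \text{ a policy of } E'\}$ is exactly the set of safe policies of $E$.

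Combining the two steps, by (i)--(iii) we get $\sup_{\pi} V^\pi_{E'} = \sup_{\sigma\text{ safe in }E} V^\sigma_E = V^{\pi^*_s}_E$, so the reward-optimal policy $\pi'$ of $E'$ to which $L$ converges satisfies $V^{\bar{\pi'}}_E = V^{\pi^*_s}_E$ with $\bar{\pi'}$ safe; that is, the behavior that $L$-with-Algorithm~\ref{alg:mname} actually executes in $E$ converges to a reward-optimal safe policy, which is what $\pi^*_s$ denotes. (If several reward-optimal safe policies exist, ``$\pi^*_s$'' is read as ``a reward-optimal safe policy''; the locally-optimal case is identical with ``reward-optimal'' replaced by ``locally reward-optimal'' throughout, using that both the map $\pi\mapsto\bar\pi$ and the value identity (i) are local in the policy parameter.)

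The step I expect to require the most care is not conceptual but bookkeeping: the value identity (i), which underpins everything, needs the coupling between $E'$-trajectories under $\pi$ and $E$-trajectories under $\bar\pi$ spelled out cleanly --- in particular that $\mu_{s,a}$ is genuinely a function of the current state, so that no hidden history dependence sneaks in and $E'$ stays Markov. A secondary point worth stating explicitly is that the convergence hypothesis, although phrased for $E$, is invoked for $E'$; this is legitimate because $E'$ is an MDP of the same signature and $L$ is assumed to be a convergent RL algorithm, but it should be flagged rather than glossed.
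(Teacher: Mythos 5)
Your proposal is correct and follows essentially the same route as the paper: both construct the wrapped MDP $E'$ with $T'$ and $R'$ averaging over a uniformly chosen safe replacement action, and both establish a value-preserving correspondence between policies of $E'$ and safe policies of $E$ (your induced-policy map $\pi\mapsto\bar\pi$ is exactly the paper's map $g$, and your facts (i)--(iii) are its two lemmas). If anything, you are more explicit than the paper on two points it glosses over --- that the convergence hypothesis must be invoked for $E'$ rather than $E$, and how the local-optimality case is handled --- but the underlying argument is the same.
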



\section{Experimental Validation of \mname}
\label{sec:evaluation}


\begin{figure*}[t]
\centering
\subfigure[\label{fig:xo_illustration}]{\includegraphics[height=0.2\columnwidth]{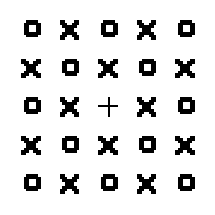}}
\subfigure[\label{fig:acc_illustration}]{\includegraphics[height=0.15\columnwidth]{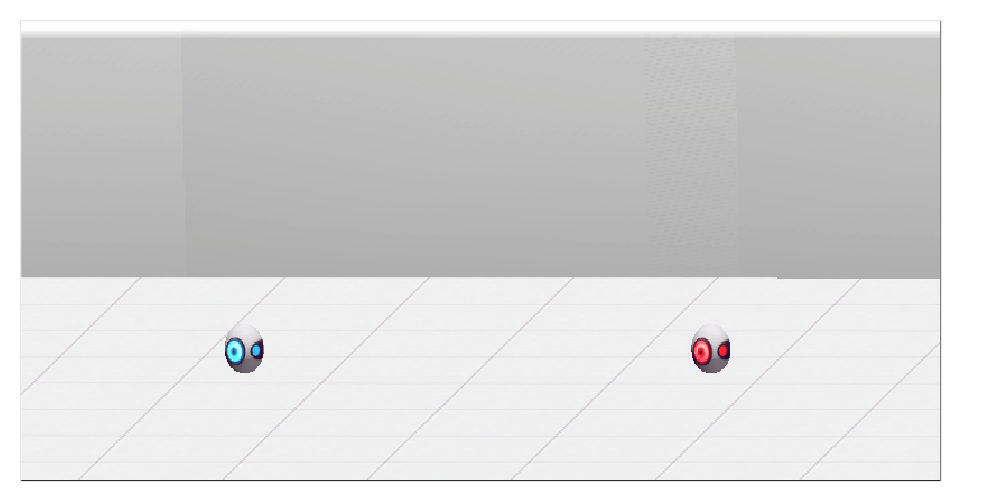}}
\subfigure[\label{fig:gf_illustration}]{\includegraphics[height=0.2\columnwidth]{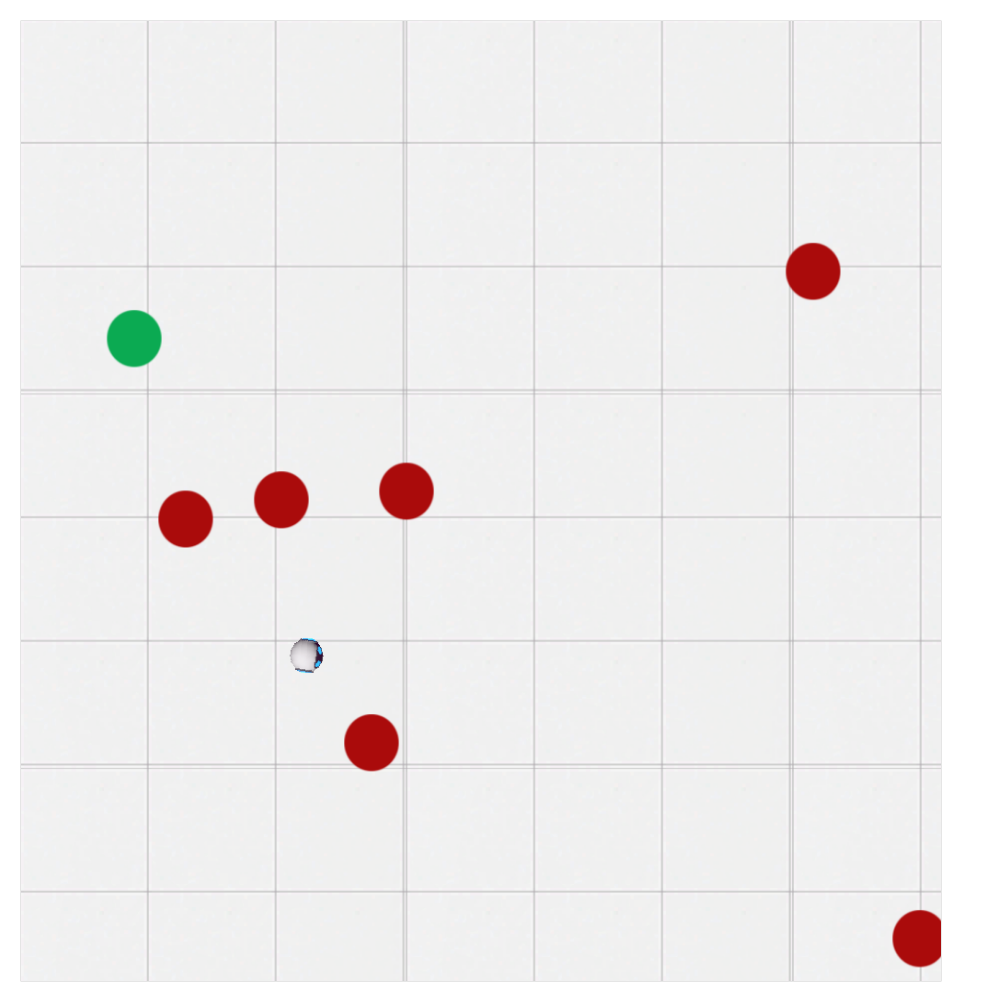}}
\subfigure[\label{fig:pointmesses_illustration}]{\includegraphics[height=0.2\columnwidth]{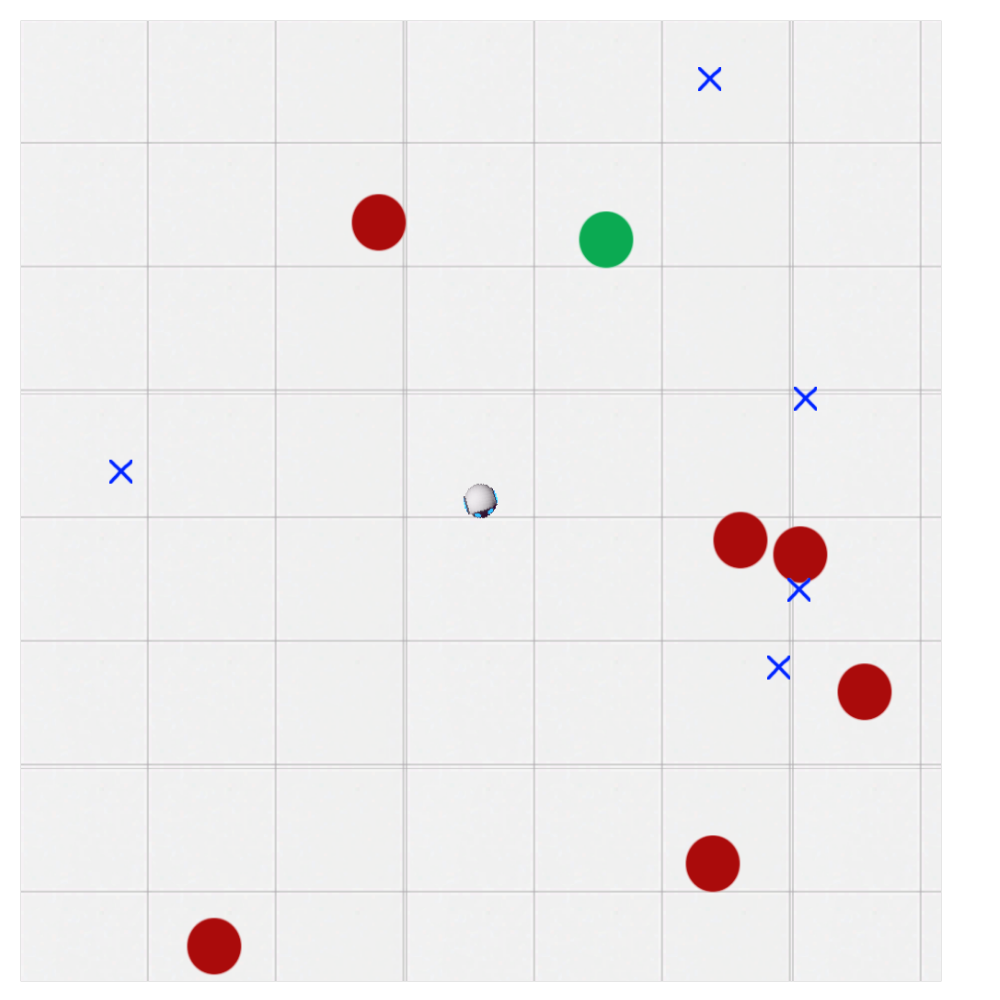}}
\caption{
Visualizations of evaluation environments.
 \textbf{(a)} XO environment
 \textbf{(b)} ACC environment
 \textbf{(c)} Goal-finding environment
 \textbf{(d)} Pointmess environment
}
\end{figure*}

We evaluate \mname on four environments:
a discrete \textbf{XO} environment \citep{garnelo2016towards},
an adaptive cruise control environment (ACC),
a 2D goal-finding environment similar to the Open AI Safety Gym Goal environment \citep{Ray2019} but without a MuJoCo dependency (GF), and
a pointmesses environment that emphasizes the problem of preventing reward hacking in safe exploration systems (PM).
\mname explores each environment without encountering any unsafe states.

\begin{figure}[ht]
\centering
\subfigure[\label{fig:xo_safe}]{\includegraphics[width=0.24\columnwidth]{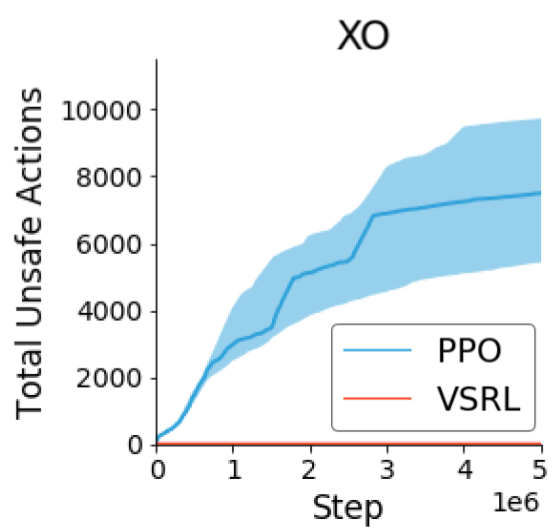}}
\subfigure[\label{fig:acc_safe}]{\includegraphics[width=0.24\columnwidth]{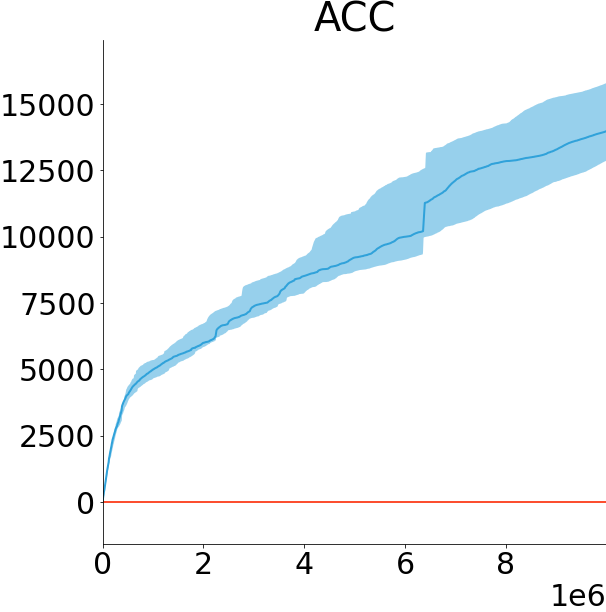}}
\subfigure[\label{fig:pmgf_safe}]{\includegraphics[width=0.24\columnwidth]{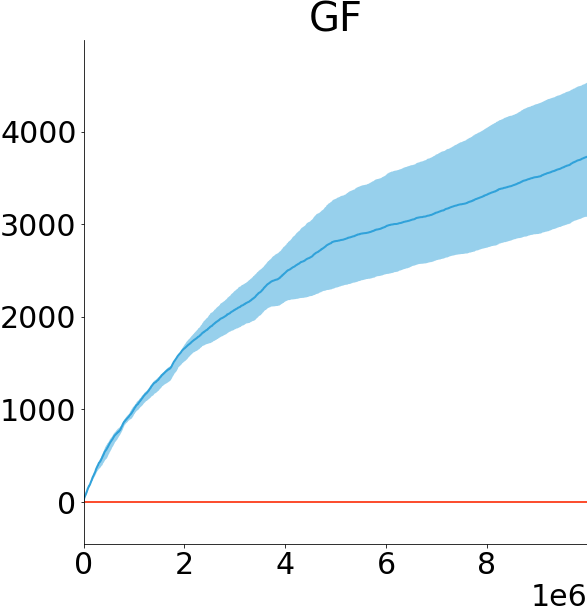}}
\subfigure[\label{fig:pm_safe}]{\includegraphics[width=0.24\columnwidth]{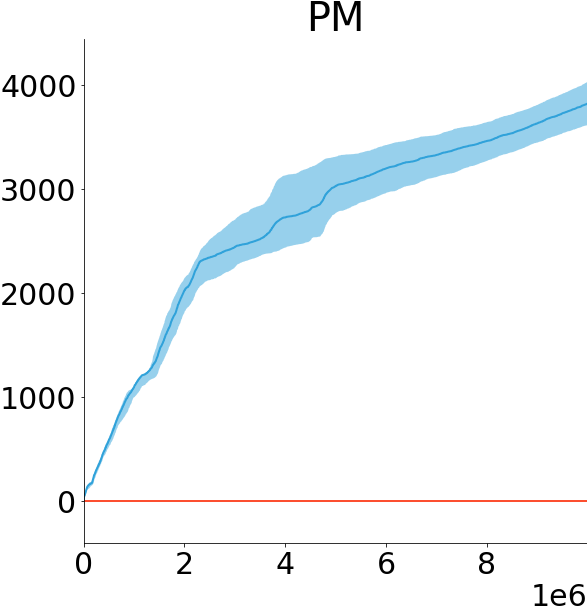}}

\subfigure[\label{fig:xo_reward}]{\includegraphics[width=0.24\columnwidth]{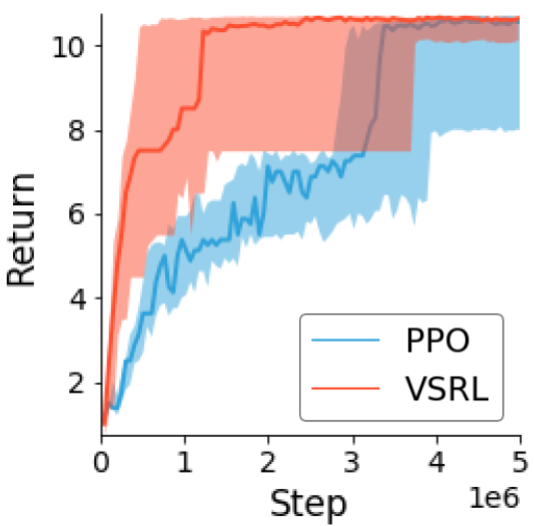}}
\subfigure[\label{fig:acc_reward}]{\includegraphics[width=0.24\columnwidth]{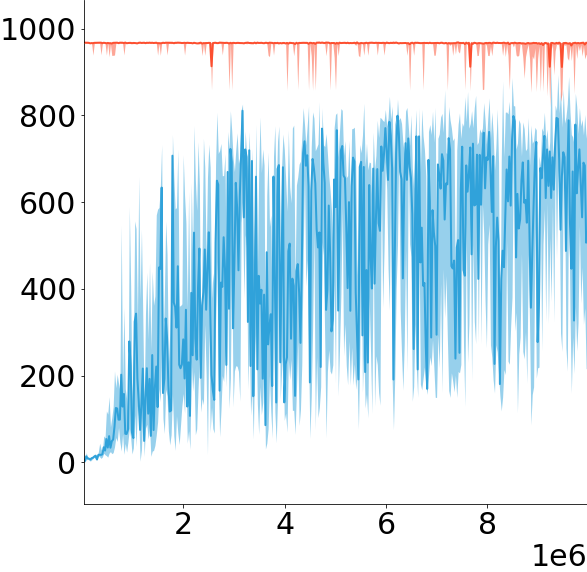}}
\subfigure[\label{fig:pmgf_reward}]{\includegraphics[width=0.24\columnwidth]{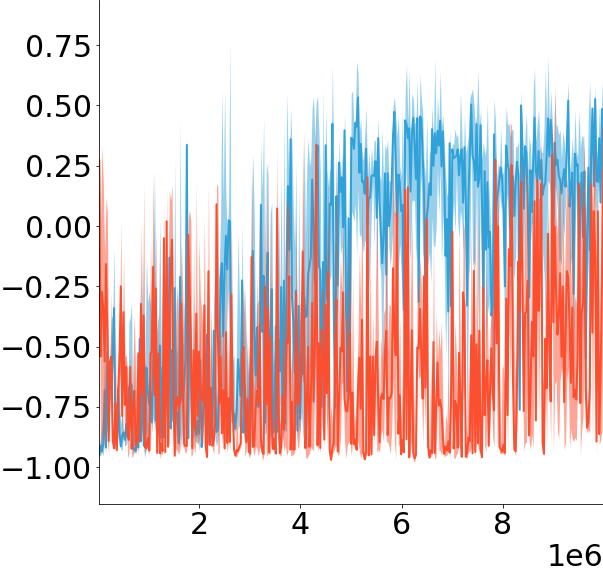}}
\subfigure[\label{fig:pm_reward}]{\includegraphics[width=0.24\columnwidth]{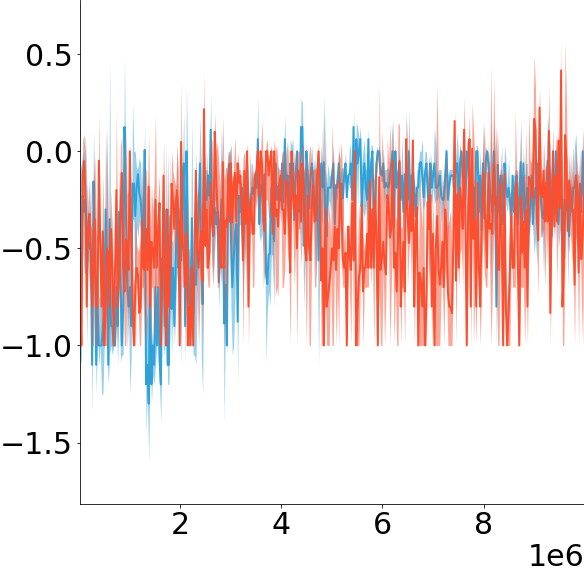}}

\caption{
{\small
Empirical evaluation of \mname on all environments. All plots show the median and interquartile range of 4+ repeats.
}}
\label{fig:reward_and_safety}
\end{figure}

The \textbf{XO} Environment is a simple setting introduced by \citep{garnelo2016towards} for demonstrating symbolic reinforcement learning algorithms (the implementation by \citet{garnelo2016towards} was unavailable, so we reimplemented this environment). 
The \textbf{XO} environment, visualized in \rref{fig:xo_illustration}, contains three types of objects: \textbf{X} objects that must be collected (+1 reward), \textbf{O} objects that must be avoided (-1 reward), and the agent (marked by a \textbf{+}). 
There is also a small penalty (-0.01) at each step to encourage rapid collection of all \textbf{X}s and completion of the episode. 
This environment provides a simple baseline for evaluating \mname. 
It is also simple to modify and extend, which we use to evaluate the ability of \mname to generalize safe policies to environments that deviate slightly from implicit modeling assumptions. 
The symbolic state space includes the position of the \textbf{+} and the \textbf{O}, but not the position of the \textbf{X}s because they are not safety-relevant. The purpose of this benchmark is to provide a benchmark for safe exploration in a simple discrete setting.

The adaptive cruise control (ACC) environment has two objects: a follower and a leader. The follower must maintain a fixed distance from the leader without either running into the leader or following too far behind.  We use the verified model from \citep{DBLP:journals/sttt/QueselMLAP16} to constrain the agent's dynamics.

The 2D goal-finding environment consists of an agent, a set of obstacles, and a goal state. The obstacles are the red circles and the goal state is the green circle. The agent must navigate from its (random) starting position to the goal state without encountering any of the obstacles. Unlike the OpenAI Safety Gym, the obstacles are \emph{hard} safety constraints; i.e., the episode ends if the agent hits a hazard. We use the verified model from \citep{DBLP:conf/rss/MitschGP13} to constrain the agent's dynamics.

The 2D pointmesses environment consists of an agent, a set of obstacles, a goal state, and a set of pointmesses (blue Xs). The agent receives reward for picking up the pointmesses, and the episode ends when the agent picks up all messes and reaches the goal state. Unlike the 2D goal-finding environment, hitting an obstacle does not end the episode. Instead, the obstacle is removed from the environment and a random number of new pointmesses spawn in its place. Notice that this means that the agent may reward hack by taking an unsafe action (hitting an obstacle) and then cleaning up the resulting pointmesses. We consider this the incorrect behavior. We use the verified model from \citep{DBLP:conf/rss/MitschGP13} to constrain the agent's dynamics.

\begin{table}[t]
  \centering
  \begin{tabular}{lcccccccc}
    \multicolumn{1}{l}{} & \multicolumn{2}{c}{XO} & \multicolumn{2}{c}{ACC} & \multicolumn{2}{c}{GF} & \multicolumn{2}{c}{PM}\\
    \cmidrule(lr){2-3}\cmidrule(lr){4-5}\cmidrule(lr){6-7}\cmidrule(lr){8-9}
    Method & R & U & R & U & R & U & R & U\\
    \midrule
    PPO & 10.5 & 7500 & 529 & 13983 & 0.233 & 3733 & -0.25 & 3819\\
    VSRL & 10.5 & 0 & 967 & 0 & 0.228 & 0 & -0.225 & 0 \\
    \bottomrule
  \end{tabular}
  \vspace{1em}
  \caption{Final reward (R; higher is better) and total number of unsafe actions (U; lower is better) on all environments. All results are the median over at least 4 replicates.} \label{tab:results}
\end{table}

We compare \mname to PPO using two metrics: the number of safety violations during training and the cumulative reward. These results are
summarized in \rref{tab:results}.
\mname is able to perfectly preserve safety in all environments from the beginning of training even with the $\epsilon$-bounded errors in extracting the symbolic features from the images. In contrast, vanilla PPO takes many unsafe actions while training and does not always converge to a policy that entirely avoids unsafe objects by the end of training.

In some environments, preserving safety specifications also substantially improves sample efficiency and policy performance early in the training process. In the ACC environment, in particular, it is very easy to learn a safe policy which is reward-optimal. In the GF and PM environments, both the baseline agent and the \mname agent struggle to learn to perform the task well (note that these tasks are quite difficult because encountering an obstacle ends the episode). However, \mname remains safe without losing much reward relative to the amount of uncertainty in both policies. See \rref{appendix:fullcode} for details on our experimental evaluation and implementation.

\section{Related Work}
\label{sec: relwork}
Recently, there has been a growing interest in safe RL, especially in the context of \emph{safe exploration}, where the agent has to be safe also during training.
A naive approach to RL safety is reward shaping, in which one defines a penalty cost for unsafe actions. This approach has several drawbacks, e.g. the choice of the penalty is brittle, so a naive choice may not outweight a shorter path to the reward, as shown by \citet{dalal2018safe}.
Therefore, recent work on safe RL addresses the challenge of providing reward-agnostic safety guarantees for deep RL \citep{garcia2015comprehensive,xiang2018verification}.
Many recent safe exploration methods focus on safety guarantees that hold in expectation (e.g., \citep{DBLP:conf/icml/SchulmanLAJM15,DBLP:conf/icml/AchiamHTA17}) or with high probability (e.g., \citep{berkenkamp2017safe,dalal2018safe,koller2018learning,cheng2019end}. Some of these approaches achieve impressive results by drawing upon techniques from control theory, such as Lyapunov functions \citep{berkenkamp2017safe} and control barrier certificates.

On the other hand, ensuring safety in expectation or with high probability is generally not sufficient in safety-critical settings where guarantees must hold \emph{always}, even for rare and measure-zero events. 
Numerical testing alone cannot provide such guarantees in practice \citep{RANDDriveToSafety} or even in theory \citep{DBLP:conf/hybrid/PlatzerC07}. 
The problem of providing formal guarantees in RL is called \emph{formally constrained reinforcement learning (FCRL)}. 
Existing FCRL methods such as \citep{HasanbeigKroening,hasanbeig2018,Hasanbeig2019, hasanbeig2020cautious, DBLP:conf/tacas/HahnPSSTW19,DBLP:conf/aaai/AlshiekhBEKNT18,aaai18,DBLP:journals/corr/neuralsimplex,de2019foundations} combine the best of both worlds: they optimize for a reward function while still providing formal safety guarantees. 
While most FCRL method can only ensure the safety in discrete-time environments known a priori,
\citet{aaai18,DBLP:conf/tacas/FultonP19} introduce \emph{Justified Speculative Control}, which exploits Differential Dynamic Logic\citep{DBLP:conf/cade/Platzer15} to prove the safety of \emph{hybrid systems}, systems that combine an agent's discrete-time decisions with a continuous time dynamics of the system. 

A major drawback of current FCRL methods 
is that they only learn control policies over handcrafted symbolic state spaces. 
While many methods extract a symbolic mapping for RL from visual data, e.g. \citep{lyu2019sdrl,yang2018peorl,yang2019program,lu2018robot,garnelo2016towards,li2018object_odrl,liang2018task,goel2018unsupervised_morel}, they all require that all of the reward-relevant features are explicitly represented in the symbolic space. As shown by the many successes of Deep RL, e.g. \citep{mnih-atari-2013}, handcrafted features often miss important signals hidden in the raw data.

Our approach aims at combining the best of FCRL and end-to-end RL to ensure that exploration is always safe with formal guarantees, while allowing a deep RL algorithm to fully exploit the visual inputs for reward optimization.


\section{Conclusion and Discussions}
\label{sec:conclusion}

Safe exploration in the presence of hard safety constraints is a schallenging problem in reinforcement learning. We contribute \mname, an approach toward safe learning on visual inputs. Through theoretical analysis and experimental evaluation, this paper establishes that \mname maintains perfect safety during exploration while obtaining comparable reward.
Because \mname separates safety-critical object detection from RL, next steps should include applying tools from adversarial robustness to the object detectors used by \mname.

\bibliographystyle{icml2020}
\bibliography{main}

\begin{thebibliography}{51}
\providecommand{\natexlab}[1]{#1}
\providecommand{\url}[1]{\texttt{#1}}
\expandafter\ifx\csname urlstyle\endcsname\relax
  \providecommand{\doi}[1]{doi: #1}\else
  \providecommand{\doi}{doi: \begingroup \urlstyle{rm}\Url}\fi

\bibitem[Achiam et~al.(2017)Achiam, Held, Tamar, and
  Abbeel]{DBLP:conf/icml/AchiamHTA17}
Achiam, J., Held, D., Tamar, A., and Abbeel, P.
\newblock Constrained policy optimization.
\newblock In Precup, D. and Teh, Y.~W. (eds.), \emph{International Conference
  on Machine Learning ({ICML} 2017)}, volume~70 of \emph{Proceedings of Machine
  Learning Research}, pp.\  22--31. {PMLR}, 2017.

\bibitem[Alshiekh et~al.(2018)Alshiekh, Bloem, Ehlers, K{\"{o}}nighofer,
  Niekum, and Topcu]{DBLP:conf/aaai/AlshiekhBEKNT18}
Alshiekh, M., Bloem, R., Ehlers, R., K{\"{o}}nighofer, B., Niekum, S., and
  Topcu, U.
\newblock Safe reinforcement learning via shielding.
\newblock In \emph{{AAAI} Conference on Artificial Intelligence}, 2018.

\bibitem[Berkenkamp et~al.(2017)Berkenkamp, Turchetta, Schoellig, and
  Krause]{berkenkamp2017safe}
Berkenkamp, F., Turchetta, M., Schoellig, A., and Krause, A.
\newblock Safe model-based reinforcement learning with stability guarantees.
\newblock In \emph{Advances in neural information processing systems}, pp.\
  908--918, 2017.

\bibitem[Cheng et~al.(2019)Cheng, Orosz, Murray, and Burdick]{cheng2019end}
Cheng, R., Orosz, G., Murray, R.~M., and Burdick, J.~W.
\newblock End-to-end safe reinforcement learning through barrier functions for
  safety-critical continuous control tasks.
\newblock In \emph{Proceedings of the AAAI Conference on Artificial
  Intelligence}, volume~33, pp.\  3387--3395, 2019.

\bibitem[Clarke et~al.(2018)Clarke, Henzinger, Veith, and
  Bloem]{clarke2018handbook}
Clarke, E.~M., Henzinger, T.~A., Veith, H., and Bloem, R. (eds.).
\newblock \emph{Handbook of Model Checking}.
\newblock Springer, 2018.

\bibitem[Dalal et~al.(2018)Dalal, Dvijotham, Vecerik, Hester, Paduraru, and
  Tassa]{dalal2018safe}
Dalal, G., Dvijotham, K., Vecerik, M., Hester, T., Paduraru, C., and Tassa, Y.
\newblock Safe exploration in continuous action spaces.
\newblock \emph{arXiv preprint arXiv:1801.08757}, 2018.

\bibitem[De~Giacomo et~al.(2019)De~Giacomo, Iocchi, Favorito, and
  Patrizi]{de2019foundations}
De~Giacomo, G., Iocchi, L., Favorito, M., and Patrizi, F.
\newblock Foundations for restraining bolts: Reinforcement learning with
  ltlf/ldlf restraining specifications.
\newblock In \emph{International Conference on Automated Planning and
  Scheduling ({ICAPS} 2019)}, 2019.

\bibitem[Espeholt et~al.(2018)Espeholt, Soyer, Munos, Simonyan, Mnih, Ward,
  Doron, Firoiu, Harley, Dunning, et~al.]{espeholt2018impala}
Espeholt, L., Soyer, H., Munos, R., Simonyan, K., Mnih, V., Ward, T., Doron,
  Y., Firoiu, V., Harley, T., Dunning, I., et~al.
\newblock Impala: Scalable distributed deep-rl with importance weighted
  actor-learner architectures.
\newblock \emph{arXiv preprint arXiv:1802.01561}, 2018.

\bibitem[Fulton \& Platzer(2018)Fulton and Platzer]{aaai18}
Fulton, N. and Platzer, A.
\newblock Safe reinforcement learning via formal methods: Toward safe control
  through proof and learning.
\newblock In \emph{{AAAI} Conference on Artificial Intelligence}, 2018.

\bibitem[Fulton \& Platzer(2019)Fulton and Platzer]{DBLP:conf/tacas/FultonP19}
Fulton, N. and Platzer, A.
\newblock Verifiably safe off-model reinforcement learning.
\newblock In Vojnar, T. and Zhang, L. (eds.), \emph{{TACAS} 2019}, volume 11427
  of \emph{Lecture Notes in Computer Science}, pp.\  413--430. Springer, 2019.
\newblock ISBN 978-3-030-17461-3.
\newblock \doi{10.1007/978-3-030-17462-0\_28}.

\bibitem[Fulton et~al.(2015)Fulton, Mitsch, Quesel, V{\"o}lp, and
  Platzer]{DBLP:conf/cade/FultonMQVP15}
Fulton, N., Mitsch, S., Quesel, J.-D., V{\"o}lp, M., and Platzer, A.
\newblock {KeYmaera X}: An axiomatic tactical theorem prover for hybrid
  systems.
\newblock In \emph{CADE}, 2015.

\bibitem[Fulton et~al.(2017)Fulton, Mitsch, Bohrer, and Platzer]{bellerophon}
Fulton, N., Mitsch, S., Bohrer, B., and Platzer, A.
\newblock Bellerophon: Tactical theorem proving for hybrid systems.
\newblock In \emph{International Conference on Interactive Theorem Proving},
  2017.

\bibitem[Garc{\i}a \& Fern{\'a}ndez(2015)Garc{\i}a and
  Fern{\'a}ndez]{garcia2015comprehensive}
Garc{\i}a, J. and Fern{\'a}ndez, F.
\newblock A comprehensive survey on safe reinforcement learning.
\newblock \emph{Journal of Machine Learning Research}, 2015.

\bibitem[Garnelo et~al.(2016)Garnelo, Arulkumaran, and
  Shanahan]{garnelo2016towards}
Garnelo, M., Arulkumaran, K., and Shanahan, M.
\newblock Towards deep symbolic reinforcement learning.
\newblock \emph{arXiv preprint arXiv:1609.05518}, 2016.

\bibitem[Goel et~al.(2018)Goel, Weng, and Poupart]{goel2018unsupervised_morel}
Goel, V., Weng, J., and Poupart, P.
\newblock Unsupervised video object segmentation for deep reinforcement
  learning.
\newblock In \emph{Advances in Neural Information Processing Systems}, 2018.

\bibitem[Hahn et~al.(2019)Hahn, Perez, Schewe, Somenzi, Trivedi, and
  Wojtczak]{DBLP:conf/tacas/HahnPSSTW19}
Hahn, E.~M., Perez, M., Schewe, S., Somenzi, F., Trivedi, A., and Wojtczak, D.
\newblock Omega-regular objectives in model-free reinforcement learning.
\newblock In \emph{{TACAS} 2019}, 2019.

\bibitem[Hasanbeig et~al.(2018{\natexlab{a}})Hasanbeig, Abate, and
  Kroening]{HasanbeigKroening}
Hasanbeig, M., Abate, A., and Kroening, D.
\newblock Logically-correct reinforcement learning.
\newblock \emph{CoRR}, abs/1801.08099, 2018{\natexlab{a}}.

\bibitem[Hasanbeig et~al.(2018{\natexlab{b}})Hasanbeig, Abate, and
  Kroening]{hasanbeig2018}
Hasanbeig, M., Abate, A., and Kroening, D.
\newblock Logically-constrained reinforcement learning.
\newblock \emph{arXiv preprint arXiv:1801.08099}, 2018{\natexlab{b}}.

\bibitem[{Hasanbeig} et~al.(2019){Hasanbeig}, {Kantaros}, {Abate}, {Kroening},
  {Pappas}, and {Lee}]{Hasanbeig2019}
{Hasanbeig}, M., {Kantaros}, Y., {Abate}, A.~r., {Kroening}, D., {Pappas},
  G.~J., and {Lee}, I.
\newblock {Reinforcement Learning for Temporal Logic Control Synthesis with
  Probabilistic Satisfaction Guarantees}.
\newblock \emph{arXiv e-prints}, art. arXiv:1909.05304, September 2019.

\bibitem[Hasanbeig et~al.(2020)Hasanbeig, Abate, and
  Kroening]{hasanbeig2020cautious}
Hasanbeig, M., Abate, A., and Kroening, D.
\newblock Cautious reinforcement learning with logical constraints.
\newblock \emph{arXiv preprint arXiv:2002.12156}, 2020.

\bibitem[He et~al.(2016)He, Zhang, Ren, and Sun]{he2016deep_resnet}
He, K., Zhang, X., Ren, S., and Sun, J.
\newblock Deep residual learning for image recognition.
\newblock In \emph{Proceedings of the IEEE conference on computer vision and
  pattern recognition}, pp.\  770--778, 2016.

\bibitem[{ISO-26262}(2011)]{iso26262}
{ISO-26262}.
\newblock {International Organization for Standardization} 26262 road vehicles
  – functional safety.
\newblock 2011.

\bibitem[Kalra \& Paddock(2016)Kalra and Paddock]{RANDDriveToSafety}
Kalra, N. and Paddock, S.~M.
\newblock \emph{Driving to Safety: How Many Miles of Driving Would It Take to
  Demonstrate Autonomous Vehicle Reliability?}
\newblock RAND Corporation, 2016.

\bibitem[Kingma \& Ba(2014)Kingma and Ba]{kingma2014adam}
Kingma, D.~P. and Ba, J.
\newblock Adam: A method for stochastic optimization.
\newblock \emph{arXiv preprint arXiv:1412.6980}, 2014.

\bibitem[Koller et~al.(2018)Koller, Berkenkamp, Turchetta, and
  Krause]{koller2018learning}
Koller, T., Berkenkamp, F., Turchetta, M., and Krause, A.
\newblock Learning-based model predictive control for safe exploration.
\newblock In \emph{2018 IEEE Conference on Decision and Control (CDC)}, pp.\
  6059--6066. IEEE, 2018.

\bibitem[Law \& Deng(2018)Law and Deng]{law2018cornernet}
Law, H. and Deng, J.
\newblock Cornernet: Detecting objects as paired keypoints.
\newblock In \emph{European Conference on Computer Vision}, 2018.

\bibitem[Li et~al.(2018)Li, Sycara, and Iyer]{li2018object_odrl}
Li, Y., Sycara, K., and Iyer, R.
\newblock Object-sensitive deep reinforcement learning.
\newblock \emph{arXiv preprint arXiv:1809.06064}, 2018.

\bibitem[Liang \& Boularias(2018)Liang and Boularias]{liang2018task}
Liang, J. and Boularias, A.
\newblock Task-relevant object discovery and categorization for playing
  first-person shooter games.
\newblock \emph{arXiv preprint arXiv:1806.06392}, 2018.

\bibitem[Lin et~al.(2017)Lin, Goyal, Girshick, He, and
  Doll{\'a}r]{lin2017focal}
Lin, T.-Y., Goyal, P., Girshick, R., He, K., and Doll{\'a}r, P.
\newblock Focal loss for dense object detection.
\newblock In \emph{IEEE international conference on computer vision}, 2017.

\bibitem[Lu et~al.(2018)Lu, Zhang, Stone, and Chen]{lu2018robot}
Lu, K., Zhang, S., Stone, P., and Chen, X.
\newblock Robot representing and reasoning with knowledge from reinforcement
  learning.
\newblock \emph{arXiv preprint arXiv:1809.11074}, 2018.

\bibitem[Lyu et~al.(2019)Lyu, Yang, Liu, and Gustafson]{lyu2019sdrl}
Lyu, D., Yang, F., Liu, B., and Gustafson, S.
\newblock {SDRL}: interpretable and data-efficient deep reinforcement learning
  leveraging symbolic planning.
\newblock In \emph{AAAI'19}, 2019.

\bibitem[Mitsch \& Platzer(2016)Mitsch and
  Platzer]{DBLP:journals/fmsd/MitschP16}
Mitsch, S. and Platzer, A.
\newblock {ModelPlex}: Verified runtime validation of verified cyber-physical
  system models.
\newblock \emph{Form. Methods Syst. Des.}, 49\penalty0 (1):\penalty0 33--74,
  2016.
\newblock Special issue of selected papers from RV'14.

\bibitem[Mitsch et~al.(2013)Mitsch, Ghorbal, and
  Platzer]{DBLP:conf/rss/MitschGP13}
Mitsch, S., Ghorbal, K., and Platzer, A.
\newblock On provably safe obstacle avoidance for autonomous robotic ground
  vehicles.
\newblock In Newman, P., Fox, D., and Hsu, D. (eds.), \emph{Robotics: Science
  and Systems}, 2013.

\bibitem[Mnih et~al.(2013)Mnih, Kavukcuoglu, Silver, Graves, Antonoglou,
  Wierstra, and Riedmiller]{mnih-atari-2013}
Mnih, V., Kavukcuoglu, K., Silver, D., Graves, A., Antonoglou, I., Wierstra,
  D., and Riedmiller, M.
\newblock Playing atari with deep reinforcement learning.
\newblock In \emph{NIPS Deep Learning Workshop}. 2013.

\bibitem[Phan et~al.(2019)Phan, Paoletti, Grosu, Jansen, Smolka, and
  Stoller]{DBLP:journals/corr/neuralsimplex}
Phan, D., Paoletti, N., Grosu, R., Jansen, N., Smolka, S.~A., and Stoller,
  S.~D.
\newblock Neural simplex architecture.
\newblock 2019.

\bibitem[Platzer(2008)]{DBLP:journals/jar/Platzer08}
Platzer, A.
\newblock Differential dynamic logic for hybrid systems.
\newblock \emph{J. Autom. Reas.}, 41\penalty0 (2):\penalty0 143--189, 2008.

\bibitem[Platzer(2010)]{Platzer10}
Platzer, A.
\newblock \emph{Logical Analysis of Hybrid Systems: Proving Theorems for
  Complex Dynamics}.
\newblock Springer, Heidelberg, 2010.

\bibitem[Platzer(2012)]{DBLP:conf/lics/Platzer12a}
Platzer, A.
\newblock Logics of dynamical systems.
\newblock In \emph{LICS}, pp.\  13--24. IEEE, 2012.

\bibitem[Platzer(2015)]{DBLP:conf/cade/Platzer15}
Platzer, A.
\newblock A uniform substitution calculus for differential dynamic logic.
\newblock In \emph{CADE}, 2015.

\bibitem[Platzer(2017)]{DBLP:journals/jar/Platzer17}
Platzer, A.
\newblock A complete uniform substitution calculus for differential dynamic
  logic.
\newblock \emph{J. Autom. Reas.}, 59\penalty0 (2):\penalty0 219--266, 2017.

\bibitem[Platzer \& Clarke(2007)Platzer and
  Clarke]{DBLP:conf/hybrid/PlatzerC07}
Platzer, A. and Clarke, E.~M.
\newblock The image computation problem in hybrid systems model checking.
\newblock In Bemporad, A., Bicchi, A., and Buttazzo, G. (eds.), \emph{HSCC},
  volume 4416 of \emph{LNCS}, pp.\  473--486. Springer, 2007.
\newblock ISBN 978-3-540-71492-7.
\newblock \doi{10.1007/978-3-540-71493-4_37}.

\bibitem[Quesel et~al.(2016)Quesel, Mitsch, Loos, Arechiga, and
  Platzer]{DBLP:journals/sttt/QueselMLAP16}
Quesel, J., Mitsch, S., Loos, S.~M., Arechiga, N., and Platzer, A.
\newblock How to model and prove hybrid systems with {KeYmaera}: a tutorial on
  safety.
\newblock \emph{{STTT}}, 18\penalty0 (1):\penalty0 67--91, 2016.

\bibitem[Ray et~al.(2019)Ray, Achiam, and Amodei]{Ray2019}
Ray, A., Achiam, J., and Amodei, D.
\newblock {Benchmarking Safe Exploration in Deep Reinforcement Learning}.
\newblock 2019.

\bibitem[Schulman et~al.(2015)Schulman, Levine, Abbeel, Jordan, and
  Moritz]{DBLP:conf/icml/SchulmanLAJM15}
Schulman, J., Levine, S., Abbeel, P., Jordan, M.~I., and Moritz, P.
\newblock Trust region policy optimization.
\newblock In Bach, F.~R. and Blei, D.~M. (eds.), \emph{Proceedings of the 32nd
  International Conference on Machine Learning ({ICML} 2015)}, volume~37 of
  \emph{{JMLR} Workshop and Conference Proceedings}, pp.\  1889--1897, 2015.

\bibitem[Schulman et~al.(2017)Schulman, Wolski, Dhariwal, Radford, and
  Klimov]{DBLP:journals/corr/SchulmanWDRK17}
Schulman, J., Wolski, F., Dhariwal, P., Radford, A., and Klimov, O.
\newblock Proximal policy optimization algorithms.
\newblock 2017.
\newblock URL \url{http://arxiv.org/abs/1707.06347}.

\bibitem[Stooke \& Abbeel(2019)Stooke and Abbeel]{stooke2019rlpyt}
Stooke, A. and Abbeel, P.
\newblock rlpyt: A research code base for deep reinforcement learning in
  pytorch, 2019.

\bibitem[Sutton \& Barto(1998)Sutton and Barto]{sutton.barto:reinforcement}
Sutton, R.~S. and Barto, A.~G.
\newblock \emph{Reinforcement Learning: An Introduction}.
\newblock MIT Press, Cambridge, MA, 1998.

\bibitem[Xiang et~al.(2018)Xiang, Musau, Wild, Lopez, Hamilton, Yang,
  Rosenfeld, and Johnson]{xiang2018verification}
Xiang, W., Musau, P., Wild, A.~A., Lopez, D.~M., Hamilton, N., Yang, X.,
  Rosenfeld, J., and Johnson, T.~T.
\newblock Verification for machine learning, autonomy, and neural networks
  survey.
\newblock \emph{arXiv}, 2018.

\bibitem[Yang et~al.(2018)Yang, Lyu, Liu, and Gustafson]{yang2018peorl}
Yang, F., Lyu, D., Liu, B., and Gustafson, S.
\newblock Peorl: Integrating symbolic planning and hierarchical reinforcement
  learning for robust decision-making.
\newblock \emph{arXiv preprint arXiv:1804.07779}, 2018.

\bibitem[Yang et~al.(2019)Yang, Gustafson, Elkholy, Lyu, and
  Liu]{yang2019program}
Yang, F., Gustafson, S., Elkholy, A., Lyu, D., and Liu, B.
\newblock Program search for machine learning pipelines leveraging symbolic
  planning and reinforcement learning.
\newblock In \emph{Genetic Programming Theory and Practice XVI}. 2019.

\bibitem[Zhou et~al.(2019)Zhou, Wang, and
  Kr{\"a}henb{\"u}hl]{zhou2019objects_centernet}
Zhou, X., Wang, D., and Kr{\"a}henb{\"u}hl, P.
\newblock Objects as points.
\newblock \emph{arXiv preprint arXiv:1904.07850}, 2019.

\end{thebibliography}


\clearpage
\section*{Supplementary material for: Verifiably Safe Exploration for End-to-End
Reinforcement Learning}
\appendix

\section{Model Monitoring}\label{appendix:monitors}

We use differential Dynamic Logic (\dL) \citep{DBLP:journals/jar/Platzer08,Platzer10,DBLP:conf/lics/Platzer12a,DBLP:conf/cade/Platzer15,DBLP:journals/jar/Platzer17} to specify safety constraints on the agent's action space. 
\dL is a logic for specifying and proving reachability properties of both discrete and continuous time dynamical systems. 

In this section we expand on the definitions and
provide some illustrative examples.
In particular, we focus on the language of hybrid programs (HPs), their reachability logic (\dL), and monitor synthesis for \dL formulas.

\subsection{Hybrid Programs Overview}

As shown succinctly in \rref{tab:hps}, hybrid programs are a simple programming language that combines imperative programs with systems of differential equations. 
We expand the description from \rref{tab:hps} and define the syntax and informal semantics of HPs are as follows:
\begin{itemize}
\item $\alpha;\beta$ executes $\alpha$ and then executes $\beta$. 
\item $\alpha \cup \beta$ executes either $\alpha$ or $\beta$ nondeterministically.
\item $\alpha^*$ repeats $\alpha$ zero or more times nondeterministically.
\item $x := \theta$ evaluates term $\theta$ and assigns result to $x$. 
\item $x := *$ assigns an arbitrary real value to $x$. 
\item $\{x_1'=\theta_1,...,x_n'=\theta_n \& F\}$ is the continuous evolution of $x_i$ along the solution to the system constrained to a domain defined by $F$.
\item $?F$ aborts if formula $F$ is not true.
\end{itemize}

Hybrid programs have a denotational semantics that defines, for reach program, the set of states that are reachable by executing the program from an initial state. A state is an assignment of variables to values. For example, the denotation of $x := t$ in a state $s$ is:
\begin{align*}
    \den{x:=t}(s)(v) &= s(v) \text{ for } v \not = x \\
    \den{x:=t}(s)(x) &= t
\end{align*}

Composite programs are given meaning by their constituent parts. For example, the meaning of $\alpha \cup \beta$ is:
\[
\den{\alpha \cup \beta}(s) = \den{\alpha}(s) \cup \den{\beta}(s)
\]

A full definition of the denotational semantics corresponding to the informal meanings given above is provided by \citep{DBLP:conf/cade/Platzer15}.

\subsection{Differential Dynamic Logic Overview}

Formulas of \dL are generated by the grammar:
\[
\varphi,\psi ::= f \sim g ~|~  \varphi \land \psi ~|~ \varphi \lor \psi ~|~ \varphi \limply \psi ~|~ \forall x, \varphi ~|~ \exists x, \varphi ~|~ \dibox{\alpha}\varphi
\]
where $f,g$ are polynomials of real arithmetic, $\sim$ is one of $\{ \le, <, =, >, \ge\}$, and the meaning of $\dibox{\alpha}\varphi$ is that $\varphi$ is true in every state that can be reached by executing the program $\alpha$. Formulas of \dL can be stated and proven in the KeYmaera~X theorem prover \citep{DBLP:conf/cade/FultonMQVP15,bellerophon}. 

The meaning of a \dL formula is given by a \emph{denotational semantics} that specifies the set of states $s \in S$ in which a formula is true. For example,
\begin{align*}
    \den{true} &= S \\
    \den{false} &= \emptyset \\
    \den{x=1 \land y=2} &= \{s \in S | s(x) = 1 \text{ and } s(y) = 2\}
\end{align*}
\noindent We write $\models\varphi$ as an alternative notation for the fact that $\varphi$ is true in all states (i.e., $\den{\varphi} = \den{true}$).
We denote by $\vdash \varphi$ the fact that there is a proof of $\varphi$ in the proof calculus of \dL.

\subsection{Using Safe Controller Specifications to Constrain Reinforcement Learning}
\newcommand{\CM}{CM}
\newcommand{\MM}{MM}

Given a hybrid program and proven \dL safety specification, \citet{aaai18} explains how to construct safety monitors (which we also call \emph{safe actions filters} in this paper) for reinforcement learning algorithms over a symbolic state space. In this section, we summarize their algorithm.

As opposed to our approach, \citet{aaai18} employs both a controller monitor (that ensures the safety of the controller) and a model monitor (that ensures the adherence of the model to the actual system and checks for model mismatch). 

The meaning of the controller monitor and model monitor are stated with respect to a specification with the syntactic form $P \limply \dibox{\{\text{ctrl};\text{plant}\}^*} Q$ where $P$ is a \dL formula specifying initial conditions, $\text{plant}$ is a dynamical system expressed as a hybrid program that accurately encodes the dynamics of the environment, and $Q$ is a post-condition. \citep{aaai18} assumes that $\text{ctrl}$ as the form $?P_1;a_1 \cup \cdots \cup P_n; a_n$, where $a_i$ are discrete assignment programs that correspond to the action space of the RL agent. For example, an agent that can either accelerate or brake as action space $A = \{ A, -B \}$. The corresponding control program will be $?P_1; a:=A \cup ?P_2; a:=-B$ where $P_1$ is a formula characterizing when it is safe to accelerate and $P_2$ is a formula characterizing when it is safe to brake.

Given such a formula, \citep{aaai18} defines the controller and model monitors using the following conditions:

\begin{cor}[Meaning of Controller Monitor] \label{cor:CM}
Suppose $\CM$ is a controller monitor for $P \limply \dibox{\{\text{ctrl};\text{plant}\}^*} Q$
and \m{s \in S} and $u : S \rightarrow S$.
Then $\CM(u,s)$ implies  $(s, u(s)) \in \den{\text{ctrl}}$.
\end{cor}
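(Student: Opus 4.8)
The statement is a corollary of the soundness of ModelPlex-style monitor synthesis \citep{DBLP:journals/fmsd/MitschP16}, so the plan is to unfold the definition of ``controller monitor'' inherited from \citet{aaai18} and then invoke that soundness result. Recall that, given a specification $P \limply \dibox{\{\text{ctrl};\text{plant}\}^*} Q$ with $\text{ctrl}$ of the form $?P_1;a_1 \cup \cdots \cup ?P_n;a_n$, ModelPlex synthesizes $\CM$ as a quantifier-free first-order formula over the state variables $x_1,\dots,x_k$ together with a fresh copy $x_1^+,\dots,x_k^+$ of post-state variables: it is obtained by applying the \dL proof calculus to the reachability formula asserting that the post-state $x^+$ is reachable from the current state by $\text{ctrl}$ (i.e.\ $\langle\text{ctrl}\rangle\bigwedge_i x_i = x_i^+$), and is provably equivalent to the assertion that $(s,s^+)\in\den{\text{ctrl}}$, where $s$ interprets the unprimed and $s^+$ the primed variables. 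The convention from \citet{aaai18} is that $\CM(u,s)$ denotes the truth value obtained by evaluating this formula with the unprimed variables bound to $s$ and the primed variables bound to $u(s)$.

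Given this setup the argument is short. First I would isolate the single direction of the ModelPlex equivalence that is needed, namely soundness: if $\CM$ holds under an assignment to the unprimed and primed variables, then that (pre-state, post-state) pair lies in $\den{\text{ctrl}}$. For the controller-monitor fragment there are no differential equations, so this direction follows by a routine induction on the structure of $\text{ctrl}$ (symbolic execution of $?F$, $x:=\theta$, $\cup$, and $;$), exactly as in \citep{DBLP:journals/fmsd/MitschP16}. Second, I would observe that $\CM(u,s)$ holding is, by definition, the same as $\CM$ being true under the assignment $x \mapsto s,\ x^+ \mapsto u(s)$. Third, instantiating the soundness statement at this assignment yields $(s,u(s))\in\den{\text{ctrl}}$, which is the claim.

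The closest thing to an obstacle is bookkeeping rather than mathematics: one must make the correspondence between the \emph{syntactic} monitor formula $\CM$ and its use as the two-argument predicate $\CM(u,s)$ fully precise, and keep straight that $u$ is a deterministic state-to-state map realizing one of the branches $a_i$ (after the matching test $?P_i$), so that $(s,u(s))$ can plausibly lie in $\den{\text{ctrl}}$ at all. I would also flag explicitly that only the soundness ($\limply$) direction of the ModelPlex equivalence is used — the monitor never certifies an unreachable transition — while exactness/completeness of $\CM$ is irrelevant to this corollary. No facts about \dL beyond those already established in \citep{DBLP:journals/fmsd/MitschP16,aaai18} are required.
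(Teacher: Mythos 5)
The paper does not actually prove this corollary: it is stated as the \emph{defining} semantic condition that a controller monitor must satisfy, imported from \citet{aaai18}, with the synthesis and its justification delegated entirely to ModelPlex \citep{DBLP:journals/fmsd/MitschP16}. Your reconstruction therefore does strictly more work than the paper, and it is essentially the right account of why the cited construction delivers this property: the monitor is obtained by symbolically executing the reachability formula $\langle\text{ctrl}\rangle\bigwedge_i x_i = x_i^+$ in the \dL{} calculus, the offline proof establishes that the resulting quantifier-free formula implies that diamond formula, and soundness of \dL{} together with the evaluation convention (unprimed variables bound to $s$, primed variables to $u(s)$) gives $(s,u(s))\in\den{\text{ctrl}}$. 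One caution: ModelPlex only guarantees the implication from the synthesized monitor to the reachability assertion, not the equivalence you claim in your setup --- the synthesized condition may be conservative. Since your argument (correctly) uses only that sound direction, this does not break anything, but the phrase ``provably equivalent'' overstates what the synthesis delivers in general, and it is worth being precise because the completeness direction is exactly what one would need to argue that the safety filter never rejects a genuinely safe action (which is relevant to Theorem~2, not to this corollary). Your bookkeeping remark is also apt: the two-argument predicate depends on $u$ only through the state pair $(s,u(s))$, so the statement is really a statement about state pairs, and no assumption that $u$ literally coincides with one of the branches $a_i$ is needed beyond what the monitor itself certifies.
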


\begin{cor}[Meaning of Model Monitor] \label{cor:MM}
Suppose $\MM$ is a model monitor for $\text{init} \limply \dibox{\{\text{ctrl};\text{plant}\}^*} Q$,
that $u$ is a sequence of actions, and that $s$ is a sequence of states.
If
$\MM(s_{i-1}, u_{i-1}, s_{i})$ for all $i$
then
$s_i \models Q$, and also 
$(s_i, u_i(s_i)) \in \den{\text{ctrl}}$ implies $(u_i(s_i), s_{i+1}) \in \den{\text{plant}}$.
\end{cor}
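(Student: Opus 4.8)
\textbf{Proof plan for Corollary~\ref{cor:MM} (Meaning of Model Monitor).}

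The plan is to derive the statement as a direct consequence of the soundness of ModelPlex monitor synthesis \citep{DBLP:journals/fmsd/MitschP16}, which is the theorem from which this corollary takes its name; the work here is bookkeeping on top of that guarantee. The single fact I would import is that the synthesized model monitor is a formula of real arithmetic over the pre-state, the recorded control choice, and the post-state whose satisfaction certifies the existence of a corresponding run of the loop body. Concretely, ModelPlex soundness gives: whenever $\MM(s_{i-1}, u_{i-1}, s_i)$ holds, the pair $(s_{i-1}, s_i)$ is a run of $\{\text{ctrl};\text{plant}\}$ whose intermediate (post-controller) state is exactly the state $u_{i-1}(s_{i-1})$ recorded by the action $u_{i-1}$. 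Everything below unfolds the denotational semantics of sequential composition $;$ and of the loop ${}^*$ from this one guarantee.

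First I would establish the second conclusion, which is purely local. Fix $i$ and apply the imported soundness fact to $\MM(s_i, u_i, s_{i+1})$ to obtain $(s_i, s_{i+1}) \in \den{\{\text{ctrl};\text{plant}\}}$, witnessed through the intermediate state $u_i(s_i)$. By the denotational semantics of sequential composition, membership in $\den{\{\text{ctrl};\text{plant}\}}$ means there is an intermediate state $b$ with $(s_i, b) \in \den{\text{ctrl}}$ and $(b, s_{i+1}) \in \den{\text{plant}}$, and the monitor fixes this witness to be $b = u_i(s_i)$. Hence, whenever $(s_i, u_i(s_i)) \in \den{\text{ctrl}}$, the plant relation carries $u_i(s_i)$ to $s_{i+1}$, i.e. $(u_i(s_i), s_{i+1}) \in \den{\text{plant}}$, which is exactly the stated implication.

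Next I would establish $s_i \models Q$ by induction on $i$. The base case is $s_0 \models \text{init}$, which I would extract from the trajectory setup in which $s_0 \in \sinit$ together with Assumption~\ref{assm:safe}. For the inductive step, the per-step guarantees $(s_{j-1}, s_j) \in \den{\{\text{ctrl};\text{plant}\}}$ for all $j \le i$ (each supplied by the hypothesis $\MM(s_{j-1}, u_{j-1}, s_j)$) compose to witness that $s_i$ is reachable from $s_0$ by $i$ iterations of the loop body, so $(s_0, s_i) \in \den{\{\text{ctrl};\text{plant}\}^*}$. Because the \dL formula $\text{init} \limply \dibox{\{\text{ctrl};\text{plant}\}^*}Q$ was proven valid in KeYmaera~X and $s_0 \models \text{init}$, we have $s_0 \models \dibox{\{\text{ctrl};\text{plant}\}^*}Q$; unfolding the box modality, $Q$ holds in every state reachable from $s_0$ under the starred program, and since $s_i$ is such a state, $s_i \models Q$.

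The main obstacle is definitional rather than computational: the proof rests on correctly importing the ModelPlex soundness statement and aligning its notion of a monitored transition with the paper's triple $\MM(s_{i-1}, u_{i-1}, s_i)$, in particular justifying that the recorded action $u_{i-1}$ genuinely furnishes the intermediate state $u_{i-1}(s_{i-1})$ of the sequential composition rather than being an independent object. I would also make explicit where $s_0 \models \text{init}$ enters, since the corollary as worded leaves it implicit; I would supply it from the trajectory hypothesis $s_0 \in \sinit$ together with Assumption~\ref{assm:safe}. Once these identifications are pinned down, both conclusions fall out by unfolding the semantics of $;$ and ${}^*$.
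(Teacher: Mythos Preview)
The paper does not actually supply its own proof of this corollary. In \rref{appendix:monitors} the two corollaries are introduced with the sentence ``Given such a formula, \citep{aaai18} \emph{defines} the controller and model monitors using the following conditions,'' so \rref{cor:CM} and \rref{cor:MM} are presented as the defining guarantees of the monitors, imported wholesale from \citep{aaai18} and ultimately from ModelPlex \citep{DBLP:journals/fmsd/MitschP16}, rather than as results proven in this paper.

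Your proposal---reduce to the ModelPlex soundness theorem, then unfold the denotational semantics of $;$ and ${}^*$---is precisely the argument that underlies these guarantees in the cited literature, and your bookkeeping (composing the per-step runs into a run of the starred program, then discharging $Q$ via the box modality of the proven \dL formula) is correct. Your observation that $s_0 \models \text{init}$ is left implicit in the corollary and must be supplied externally is also accurate. One small adjustment: \rref{appendix:monitors} is summarizing the purely symbolic setting of \citep{aaai18}, so the states $s_i$ here are already symbolic and you would supply $s_0 \models \text{init}$ directly from the trajectory hypothesis, rather than routing through $\psi$ and Assumption~\ref{assm:safe}.
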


\section{Proof of \rref{thm:safety}} \label{appendix:proof}

If the object detector produces an accurate mapping, then \rref{alg:mname} will preserve the safety constraint associated with 
the $\varphi$ monitor. We state this property formally in \rref{thm:correctness}.

\begin{theorem*}[Safety Theorem]\label{thm:safety}
Assume the following conditions hold along a trajectory $s_0, a_0, \dots, s_n$ with $s_0 \in \sinit$:
\begin{description}
    \item[A1] Initial states are safe: $s \in \sinit$ implies $\psi(s) \models \textit{init}$.
    \item[A2] The model and symbolic mapping are correct up to simulation: If $T(s_i,a,s_j) \not = 0$ for some action $a$ then 
    $(\psi(s_i), a(\psi(s_i))) \in \den{\textit{ctrl}}$ and  
    $(\psi(s_i), \psi(s_j)) \in {\small \den{\textit{plant}}}$.
\end{description}
\end{theorem*}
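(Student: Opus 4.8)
The plan is to induct on the index $i$ along the trajectory, carrying a \emph{strengthened} invariant: $H(i)$ says that the symbolic state $\psi(s_i)$ is reachable by the hybrid program $\{\texttt{ctrl};\texttt{plant}\}^*$ from some symbolic state $o$ with $o \models \texttt{init}$, i.e. $(o,\psi(s_i)) \in \den{\{\texttt{ctrl};\texttt{plant}\}^*}$ for some such $o$. The one external fact we use is that the safety-preservation obligation $\texttt{init} \limply \dibox{\{\texttt{ctrl};\texttt{plant}\}^*}\texttt{safe}$ has been discharged offline (by KeYmaera~X) and that $\varphi$ is its controller monitor; consequently, by the denotational meaning of $\dibox{\cdot}$, whenever $o \models \texttt{init}$ and $(o,o') \in \den{\{\texttt{ctrl};\texttt{plant}\}^*}$ we get $o' \models \texttt{safe}$. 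In particular $H(i)$ already entails $\psi(s_i) \models \texttt{safe}$, so proving $H(i)$ for all $i$ yields the theorem.

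First, the base case $i=0$. Since $s_0 \in \sinit$, assumption \textbf{A1} gives $\psi(s_0) \models \texttt{init}$; because $\alpha^*$ includes the zero-iteration run (the identity relation), $(\psi(s_0),\psi(s_0)) \in \den{\{\texttt{ctrl};\texttt{plant}\}^*}$, so $H(0)$ holds with witness $o = \psi(s_0)$. For the inductive step, assume $H(i)$ with witness $o$ and let $a'_i$ be the action \emph{actually executed} at step $i$ by \rref{alg:mname}. I would first show $\varphi(\psi(s_i),a'_i)=1$: by the definition of the guard, either the proposed action $a_i$ already passes $\varphi(\psi(s_i),\cdot)$ and $a'_i = a_i$, or $a'_i$ is drawn uniformly from $\{a \in \mc A : \varphi(\psi(s_i),a)\}$, which is nonempty because (by $H(i)$) $\psi(s_i)$ is reached after only monitor-approved actions and Assumption~\ref{assm:safe} guarantees any such state has an available safe action. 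Next, $s_{i+1} \sim T(s_i,a'_i,\cdot)$ gives $T(s_i,a'_i,s_{i+1}) \neq 0$, so \textbf{A2} applies to $a'_i$ and yields $(\psi(s_i),a'_i(\psi(s_i))) \in \den{\texttt{ctrl}}$ together with the matching \texttt{plant} clause; chaining these through the intermediate state $a'_i(\psi(s_i))$ (which agrees with $\psi(s_i)$ except on the action variable) shows $(\psi(s_i),\psi(s_{i+1})) \in \den{\texttt{ctrl};\texttt{plant}}$. Appending this one loop-body step to the run witnessing $H(i)$ gives $(o,\psi(s_{i+1})) \in \den{\{\texttt{ctrl};\texttt{plant}\}^*}$, which is $H(i+1)$; instantiating the verified obligation also records $\psi(s_{i+1}) \models \texttt{safe}$, closing the induction.

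I expect the main obstacle to be the two ``abstraction-glue'' steps rather than the induction skeleton. The first is showing the guard always executes a monitor-approved action: this quietly needs the availability-of-safe-actions clause of Assumption~\ref{assm:safe}, and it forces the invariant to be strengthened so that ``reached after only safe actions'' is itself inductive (hence carrying the reachability witness in $H$). The second is aligning the monitor semantics with the model assumptions: \rref{cor:CM} tells us $\varphi(\psi(s_i),a'_i)=1$ implies a genuine $\den{\texttt{ctrl}}$ transition, and this must be stitched to the \texttt{plant} clause of \textbf{A2} so that $\psi(s_i)\to\psi(s_{i+1})$ is certified as a bona fide execution of one iteration of $\texttt{ctrl};\texttt{plant}$, being careful about how the control assignment rewrites the action variable and about the direction of composition. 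Finally, I would remark but not dwell on the point that it is the $\epsilon$-accuracy of $\psi$ (Assumption~\ref{assm:eps}) that makes it sound to evaluate $\varphi$ on $\psi(s_i)$ rather than on the unobserved true object positions: the $\epsilon$ slack is already absorbed into the as-verified model (margins such as $x-o>\epsilon$ in place of $x\neq o$), so it never appears explicitly in the induction above.
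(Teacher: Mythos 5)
Your proof is correct, and it shares the paper's overall skeleton (induction along the trajectory, \textbf{A1} for the base case, \textbf{A2} plus the denotation of $;$ for the step), but the inductive invariant you carry is genuinely different. The paper assumes the offline proof of $\texttt{init} \limply \dibox{\{\texttt{ctrl};\texttt{plant}\}^*}\texttt{safe}$ was carried out via a loop invariant $J$, extracts the three facts $\vdash \texttt{init} \limply J$, $\vdash J \limply \texttt{safe}$, and $\vdash J \limply \dibox{\texttt{ctrl};\texttt{plant}}J$, and inducts on $\psi(s_i) \models J$. You instead carry the strongest possible invariant --- semantic reachability of $\psi(s_i)$ from an initial symbolic state under $\den{\{\texttt{ctrl};\texttt{plant}\}^*}$ --- and invoke only the \emph{validity} of the verified formula (via soundness of \dL) once at the end. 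Your version is more elementary and self-contained in that it does not presuppose the shape of the KeYmaera~X proof, at the cost of reasoning directly about the reflexive-transitive-closure semantics of $\alpha^*$; the paper's version makes the link to the concrete proof artifact $J$ explicit. You also fill in two points the paper's proof leaves implicit: first, that the guard's sampling set $\{a \in \mc A : \varphi(\psi(s_i),a)\}$ is nonempty, which indeed requires the availability clause of Assumption~\ref{assm:safe} and is precisely why the invariant must record that $s_i$ was reached via monitor-approved actions; second, that the \texttt{ctrl} and \texttt{plant} clauses of \textbf{A2} must be chained through the intermediate post-control state to conclude $(\psi(s_i),\psi(s_{i+1})) \in \den{\texttt{ctrl};\texttt{plant}}$ --- as literally stated, \textbf{A2} types the plant transition from $\psi(s_i)$ rather than from $a(\psi(s_i))$, so your explicit care about that gluing step is warranted rather than pedantic.
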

\begin{proof}
We begin the proof by pointing out that our assumption about how $$\textit{init} \limply \dibox{\{\textit{ctrl};\textit{plant}\}^*}\textit{safe}$$ was proven provides us with the following information about some formula $J$:

\begin{align*}
\vdash&~ \textit{init} \limply J  & (\textbf{LI1}) \\
\vdash&~ J \limply \textit{safe}  & (\textbf{LI2}) \\
\vdash&~ J \limply \dibox{\{\textit{ctrl};\textit{plant}\}^*}J  & (\textbf{LI3})
\end{align*}

Now, assume $s_0, a_0, s_1, a_1, \dots, s_n$ with $s_0 \in \sinit$ is a trajectory generated by running an RL agent with actions selected by \rref{alg:mname}
and proceed by induction on the length of the sequence with the inductive hypothesis that $\psi(s_i) \models J$.

If $i=0$ then $s_0 \in \sinit$ by assumption. Therefore, $\psi(s_0) \models \textit{init}$ by \textbf{A1}.
We know by \textbf{LI1} that $\vdash \textit{init} \limply J$.
Therefore, $\psi(s_0) \models J$ by Modus Ponens and the soundness of the \dL proof calculus.

Now, suppose $i > 0$. We know $\psi(s_i) \models J$ by induction.
Furthermore, we know $T(s_i, a_i, s_{i+1}) \not = 0$ because otherwise this trajectory could not exist.
By \textbf{A2} and the denotation of the $;$ operator, we know $(\psi(s_i), \psi(s_{i+1})) \in \den{\textit{ctrl};\textit{plant}}$.
By \textbf{LI3}, we know $\vdash J \limply \dibox{\textit{ctrl};\textit{plant}}J$
Therefore, $\psi(s_i) \models J$ and $(\psi(s_i), \psi(s_{i+1})) \in \den{\textit{ctrl};\textit{plant}}$ implies $\psi(s_i+1) \models J$
by the denotation of the box modality and the soundness of \dL.

We have now established that $\psi(s_i) \models J$ for all $i \ge 0$.
By \textbf{LI2}, Modus Ponens, and soundness of the \dL proof calculus, we finally conclude that $\psi(s_i) \models \textit{safe}$.
\end{proof}

Note that if all actions $a_i$ along the trajectory are generated using \rref{alg:mname}, and if the model is accurate, then the two assumptions in \rref{thm:correctness} will hold.

\section{Proof of \rref{thm:policy_equivalence}}
\label{app:env_wrapping}

In order to enforce safety, we wrap the original environment in a new one which has no unsafe actions. By not modifying the agent or training algorithm, any theoretical results (e.g. convergence) which the algorithm already has will still apply in our safety-wrapped environment. However, it is still necessary to show the relation between the (optimal) policies that may be found in the safe environment and the policies in the original environment. We show that 1) all safe policies in the original environment have the same transition probabilities and expected rewards in the wrapped environment and 2) all policies in the wrapped environment correspond to a policy in the original environment which has the same transition probabilities and expected rewards. This shows that the optimal policies in the wrapped environment are optimal among safe policies in the original environment (so no reward is lost except where required by safety).

Let the original environment be the MDP $E = (\mc S, \mc A, T, R)$. We define a safety checker $C : \mc S \times \mc A \rightarrow \{T, F\}$ to be a predicate such that $C(s, a)$ is True iff action $a$ is safe in state $s$ in $E$. When we refer to an action as safe or unsafe, we always mean in the original environment $E$. A policy $\pi$ in $E$ is safe iff
\[
\forall s \in \mc S \;\forall a \in \mc A \; \pi(a|s) > 0 \implies C(s, a).
\]

The safety-wrapped environment will be $E' = (\mc S, \mc A, T', R')$ where the transition and reward functions will be modified to ensure there are no unsafe actions and expected rewards in $E'$ correspond with those from acting safely in $E$.

$T'$ is required to prevent the agent from taking unsafe actions; for any safe action, we keep this identical to $T$. When an unsafe action is attempted, we could either take a particular safe action deterministically (perhaps shared across states, if some action is always safe, or a state-specific safe action) or sample (probably uniformly) from the safe actions in a given state. We prefer the latter approach of sampling from the safe actions because this makes taking an unsafe action have higher variance, so the agent will probably learn to avoid such actions. If unsafe actions are deterministically mapped to some safe action(s), they become indistinguishable, so the agent has no reason to avoid unsafe actions (unless we tamper with the reward function). Thus we set

\[
T'(s, a, s') = \begin{cases}
T(s, a, s') & \text{if } C(s, a) \\
\frac{1}{|\mc A_{C(s)}|} \sum_{a' \in \mc A_{C(s)}} T(s, a', s') & \text{otherwise}
\end{cases}
\]

where $\mc A_{C(s)} = \{a \in \mc A \mid C(s, a)\}$ is the set of safe actions in state $s$. This simulates replacing unsafe actions with a safe action chosen uniformly at random.

$R'$ is defined similarly so that it simulates the reward achieved by replacing unsafe actions with safe ones uniformly at random:

\[
R'(s, a) = \begin{cases}
R(s, a) & \text{if } C(s, a) \\
\frac{1}{|\mc A_{C(s)}|} \sum_{a' \in \mc A_{C(s)}} R(s, a') & \text{otherwise}
\end{cases}.
\]

\begin{lemma}
For every safe policy $\pi$ in E, following that policy in $E'$ leads to the same transitions with the same probabilities and gives the same expected rewards.
\end{lemma}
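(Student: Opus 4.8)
The plan is to unfold the definitions of $T'$ and $R'$ and observe that they only differ from $T$ and $R$ on unsafe state-action pairs, which a safe policy never visits with positive probability. First I would fix a safe policy $\pi$ in $E$ and a state $s \in \mc S$. For any action $a$ with $\pi(a|s) > 0$, safety of $\pi$ gives $C(s,a)$, and hence by the case split in the definition of $T'$ we have $T'(s,a,s') = T(s,a,s')$ for all $s'$, and similarly $R'(s,a) = R(s,a)$. So the two environments agree on exactly the actions $\pi$ actually takes.

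Next I would make the "same transitions with the same probabilities" claim precise by passing to the induced Markov chains on states. Under $\pi$, the one-step state-transition kernel in $E$ is $P_\pi(s'|s) = \sum_{a} \pi(a|s) T(s,a,s')$, and in $E'$ it is $P'_\pi(s'|s) = \sum_{a} \pi(a|s) T'(s,a,s')$. Splitting each sum over the support of $\pi(\cdot|s)$ — on which $T$ and $T'$ coincide by the previous paragraph — and noting the terms with $\pi(a|s) = 0$ contribute nothing, I get $P_\pi = P'_\pi$. An identical argument gives equality of the expected one-step rewards $\sum_a \pi(a|s) R(s,a) = \sum_a \pi(a|s) R'(s,a)$. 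Since the initial-state distribution is the same in both MDPs and the one-step dynamics agree, a straightforward induction on trajectory length shows that the distribution over any finite trajectory prefix $s_0, a_0, \dots, s_t$ is identical under $\pi$ in $E$ and in $E'$; consequently the discounted return $\E[\sum_i \gamma^i r_i]$ and the value function $V^\pi$ agree as well.

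The only mild subtlety — and the step I expect to require the most care in the writeup rather than being a true obstacle — is handling the well-definedness of the "otherwise" branch: the uniform average over $\mc A_{C(s)}$ presupposes this set is nonempty, which is exactly guaranteed by Assumption~\ref{assm:safe} (every reachable state has an available safe action). I would remark that this is only needed to make $E'$ a bona fide MDP at all; for the lemma itself the "otherwise" branch is never exercised by a safe policy, so its precise form is irrelevant. I would close by noting that this lemma is one half of the correspondence; the companion direction (every policy in $E'$ pulls back to a safe policy in $E$ with the same returns) is handled separately, and together they yield Theorem~\ref{thm:policy_equivalence}.
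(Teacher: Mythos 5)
Your proposal is correct and follows essentially the same route as the paper's (much terser) proof: a safe policy places zero probability on unsafe actions, so only the branch where $T'$ and $R'$ coincide with $T$ and $R$ is ever exercised, and the induced dynamics and expected rewards agree. Your added remarks on the induction over trajectory prefixes and on the nonemptiness of $\mc A_{C(s)}$ (via Assumption~\ref{assm:safe}) are sound elaborations rather than a different argument.
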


\begin{proof}
By definition of safety, $\pi$ has zero probability for any $(s, a)$ where $C(s, a)$ isn't true. Thus actions sampled from $\pi$ lead to transitions and rewards from the branch of $T'$ and $R'$ where they are identical to $T$ and $R$.
\end{proof}

\begin{lemma}\label{lem:isomorphic_policies}
For every policy $\pi'$ in $E'$ there exists a safe policy $\pi$ in $E$ such that $\pi'$ has the same transition probabilities and expected rewards in $E'$ as $\pi$ does in $E$.
\end{lemma}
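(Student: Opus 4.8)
The plan is to build $\pi$ from $\pi'$ by redistributing, in each state $s$, the probability mass that $\pi'$ places on unsafe actions uniformly over the safe actions $\mc A_{C(s)} = \{a \in \mc A \mid C(s,a)\}$ --- exactly mirroring the redistribution that defines $T'$ and $R'$. Concretely, for every state $s$ with $\mc A_{C(s)} \neq \emptyset$ I would set $\pi(a\mid s) = 0$ for $a \notin \mc A_{C(s)}$ and
\[
\pi(a\mid s) \;=\; \pi'(a\mid s) \;+\; \frac{1}{|\mc A_{C(s)}|}\sum_{a'' \notin \mc A_{C(s)}} \pi'(a''\mid s) \qquad \text{for } a \in \mc A_{C(s)}.
\]
The first sanity check is that $\pi(\cdot\mid s)$ is a probability distribution: all terms are nonnegative, and summing the displayed expression over the $|\mc A_{C(s)}|$ safe actions yields $\sum_{a \in \mc A_{C(s)}}\pi'(a\mid s) + \sum_{a'' \notin \mc A_{C(s)}}\pi'(a''\mid s) = 1$. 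Moreover $\pi(a\mid s) > 0$ forces $a \in \mc A_{C(s)}$, i.e.\ $C(s,a)$, so $\pi$ is safe in the sense defined above.

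The core of the argument is then a one-step computation, carried out identically for transitions and for rewards. Fixing $s$ and $s'$ and using that $\pi$ vanishes off $\mc A_{C(s)}$,
\begin{align*}
\sum_{a \in \mc A}\pi(a\mid s)\,T(s,a,s') &= \sum_{a \in \mc A_{C(s)}}\pi'(a\mid s)\,T(s,a,s') \\
&\quad + \Big(\sum_{a'' \notin \mc A_{C(s)}}\pi'(a''\mid s)\Big)\frac{1}{|\mc A_{C(s)}|}\sum_{a \in \mc A_{C(s)}}T(s,a,s'),
\end{align*}
where the second term arises by pulling the $a$-independent factor out of the double sum. On the other side, $\sum_{a \in \mc A}\pi'(a\mid s)\,T'(s,a,s')$ splits as a sum over safe actions (where $T' = T$) plus a sum over unsafe actions, and on the unsafe branch $T'(s,a,s') = \frac{1}{|\mc A_{C(s)}|}\sum_{a' \in \mc A_{C(s)}}T(s,a',s')$ is again $a$-independent, so it contributes precisely the same two terms. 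Hence the one-step transition kernels of $(\pi, E)$ and $(\pi', E')$ coincide, and replacing $T, T'$ by $R, R'$ in the identical manipulation shows the one-step expected rewards coincide. Since both MDPs share $\mc S$ and $\sinit$, agreement of the one-step kernels and one-step expected rewards propagates to equality of the induced Markov reward processes, and in particular of the value functions.

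I expect the only genuine subtlety to be well-definedness: the construction, like the definitions of $T'$ and $R'$, presupposes $\mc A_{C(s)} \neq \emptyset$. I would dispatch this by noting that $E'$ contains no unsafe transitions, so every state reachable in $E'$ from $\sinit$ is reached via safe actions; by Assumption~\ref{assm:safe} such a state always admits a safe action, so $\mc A_{C(s)} \neq \emptyset$ throughout the reachable set --- which is all the comparison of induced processes requires --- and $\pi$ may be defined arbitrarily on the unreachable states.
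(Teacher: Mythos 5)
Your proposal is correct and follows essentially the same route as the paper: the same redistribution map $g(\pi')=\pi$ that moves the unsafe-action mass uniformly onto $\mc A_{C(s)}$, followed by the same one-step algebraic identification of the transition kernels and expected rewards. The only additions are your explicit checks that $\pi(\cdot\mid s)$ normalizes and that $\mc A_{C(s)}\neq\emptyset$ on the reachable set (via Assumption~\ref{assm:safe}), both of which the paper leaves implicit.
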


\begin{proof}
For any $\pi'$ in $E'$, let $g(\pi') = \pi$ be defined such that

\[
\pi(a|s) = \begin{cases}
\pi'(a|s) + \frac{1}{|\mc A_{C(s)}|} \sum_{a' \in \overline{\mc A}_{C(s)} } \pi'(a'|s) & \text{if } C(s, a) \\
0 & \text{otherwise}
\end{cases}
\]

where $\overline{\mc A}_{C(s)} = \{a \in \mc A ~|~ \lnot C(s, a)\}$ is the set of unsafe actions in state $s$. This simulates evenly redistributing the the probability that $\pi'$ assigns to unsafe actions in $s$ among the safe actions. 

We show first that the transition probabilities of $\pi$ in $E$ and $\pi'$ in $E'$ are the same.

\begin{align*}
P_{\pi, E}(s' | s) &= \sum_{a \in \mc A} \pi(a|s) T(s, a, s') \\
&= \sum_{a \in \mc A_{C(s)} } \pi(a|s) T(s, a, s') + \sum_{a \in \overline{\mc A}_{C(s)}} \underbrace{\pi(a|s)}_{=0} T(s, a, s') \\
&= \sum_{a \in \mc A_{C(s)} } \left(\pi'(a|s) + \frac{1}{|\mc A_{C(s)}|} \sum_{a' \in \overline{\mc A}_{C(s)}} \pi'(s, a')\right) T(s, a, s') \\ 
&= \sum_{a \in \mc A_{C(s)} } \pi'(a|s)  T(s, a, s') + \sum_{a' \in \overline{\mc A}_{C(s)}} \pi'(s, a') \frac{1}{|\mc A_{C(s)}|}  \left( \sum_{a \in \mc A_{C(s)} } T(s, a, s') \right) \\
&= \sum_{a \in \mc A_{C(s)} } \pi'(a|s)  T'(s, a, s') +  \sum_{a \in \overline{\mc A}_{C(s)}} \pi'(a|s) \frac{1}{|\mc A_{C(s)}|}  \left( \sum_{a' \in \mc A_{C(s)} } T(s, a', s') \right) \\
&= \sum_{a \in \mc A_{C(s)} } \pi'(a|s)  T'(s, a, s') + \sum_{a \in \overline{\mc A}_{C(s)} } \pi'(a|s)  T'(s, a, s') \\
&= \sum_{a \in \mc A } \pi'(a|s)  T'(s, a, s') \\
&= P_{\pi', E'}(s' | s)
\end{align*}

Let $\E[\pi, E]{R_s}$ be the expected reward of following the policy $\pi$ in environment $E$ at state $s$. The equality of the expected reward for $\pi$ in every state of $E$ and $\pi'$ in every state of $E'$ can be shown similarly:

\begin{align*}
\E[\pi, E]{R_s} &= \sum_{a \in \mc A} \pi(a|s) R(s, a) \\
&= \sum_{a \in \mc A} R(s, a) \begin{cases}
\pi'(a|s) + \frac{1}{|\mc A_{C(s)}|} \sum_{a' \in \overline{\mc A}_{C(s)}} \pi'(a'|s) & \text{if } C(s, a) \\
0 & \text{otherwise}
\end{cases}\\
&= \sum_{a \in \mc A_{C(s)}} R(s, a) \left(\pi'(a|s) + \frac{1}{|\mc A_{C(s)}|} \sum_{a' \in \overline{\mc A}_{C(s)}} \pi'(a'|s)\right)\\
&= \left(\sum_{a \in \mc A_{C(s)}} \pi'(a|s) R(s, a)\right) + \left(\frac{1}{|\mc A_{C(s)}|} \sum_{a' \in \overline{\mc A}_{C(s)}} \pi'(a'|s)\right) \sum_{a \in \mc A_{C(s)}} R(s, a)
\end{align*}

\begin{align*}
\E[\pi', E']{R'_s} &= \sum_{a \in \mc A} \pi'(a|s) R'(s, a)\\
&= \sum_{a \in \mc A} \pi'(a|s) \begin{cases}
R(s, a) & \text{if } C(s, a)\\
\frac{1}{|\mc A_{C(s)}|} \sum_{a' \in \mc A_{C(s)}} R(s, a') & \text{otherwise}
\end{cases}\\
&= \left(\sum_{a \in \mc A_{C(s)}} \pi'(a|s) R(s, a)\right) + \left(\frac{1}{|\mc A_{C(s)}|} \sum_{a' \in \mc A_{C(s)}} R(s, a')\right) \sum_{a \in \overline{\mc A}_{C(s)}} \pi'(a|s)\\
&= \E[\pi, E]{R_s}
\end{align*}
\end{proof}

\begin{theorem*} 
Let $E$ be an environment and $L$ a reinforcement learning algorithm.
If $L$ converges to a reward-optimal policy $\pi^*$ in $E$,
then using \rref{alg:mname} with $L$ converges to $\pi^*_s$, the safe policy with the highest reward (i.e. the \emph{reward-optimal safe policy}).
\end{theorem*}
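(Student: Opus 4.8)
The plan is to build a bijection-style correspondence between safe policies in $E$ and all policies in the wrapped environment $E'$, and then push the convergence hypothesis through this correspondence. The two lemmas already established do most of the work: the first lemma says every safe $\pi$ in $E$ has identical transition probabilities and expected rewards when run in $E'$, and \rref{lem:isomorphic_policies} says every $\pi'$ in $E'$ has a safe counterpart $g(\pi')$ in $E$ with identical transitions and rewards. The first step is to observe that identical transition probabilities and identical expected rewards at every state imply identical value functions $V^{\pi}$ (from \rref{eq:1}), since $V^{\pi}$ is determined entirely by the state-to-state transition kernel, the per-state expected reward, and $\gamma$. Hence the map $g$ is reward-preserving in both directions: $V^{\pi}_E(s) = V^{g(\pi)}_{E'}$-style equalities, and conversely for any safe $\pi$ in $E$, $V^{\pi}_E = V^{\pi}_{E'}$.

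The second step is to identify what ``reward-optimal safe policy'' $\pi^*_s$ means and show the wrapped environment's optimum coincides with it. By \rref{lem:isomorphic_policies}, the set of achievable value profiles in $E'$ equals the set of achievable value profiles by \emph{safe} policies in $E$ (every $E'$ policy maps to a safe $E$ policy with the same values, and every safe $E$ policy is itself realizable in $E'$). Therefore a reward-optimal policy in $E'$ is exactly (the $E'$-image of) a reward-optimal \emph{safe} policy in $E$, i.e. $\pi^*_s$. The third step is the actual convergence argument: running \rref{alg:mname} with $L$ is, by construction (the algorithm only rewrites attempted unsafe actions into uniformly-sampled safe ones, exactly matching the definitions of $T'$ and $R'$), identical to running $L$ unmodified in the environment $E'$. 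Since $L$ converges to a reward-optimal policy in \emph{any} environment it is given — in particular in $E'$ — it converges to a reward-optimal policy of $E'$, which by the previous step is $\pi^*_s$. I would also remark that the ``locally optimal'' version follows verbatim, since the value-preserving correspondence is global and hence preserves the local optimization landscape as well.

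The main obstacle is making precise the claim that ``\rref{alg:mname} with $L$'' equals ``$L$ in $E'$'' as a learning process, not just as a fixed-policy evaluation. One has to check that the algorithm wrapper is transparent to the learner: the learner proposes $a_t$, but the environment it effectively interacts with returns $s_{t+1} \sim T(s_t, a'_t, \cdot)$ and $r_{t+1} \sim R(s_t, a'_t)$ where $a'_t$ is $a_t$ if safe and a uniform safe action otherwise — and this is exactly the defining distribution of $T'(s_t, a_t, \cdot)$ and $R'(s_t, a_t)$. So from $L$'s perspective the transition/reward samples it sees for the action-label $a_t$ are precisely draws from $T'$ and $R'$. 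Once this identification is granted, convergence of $L$ in $E'$ is a black-box hypothesis and nothing further is needed. A secondary subtlety worth a sentence is Assumption~\ref{assm:safe}: we need $\mc A_{C(s)} \neq \emptyset$ at every reachable state so that $T'$ and $R'$ are well-defined (no division by zero), which is guaranteed because \rref{thm:safety} ensures only safe states are ever reached and every such state has an available safe action.
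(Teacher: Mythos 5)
Your proof is correct and follows essentially the same route as the paper: both rely on the wrapped environment $E'$ and the two policy-correspondence lemmas, with the only difference being that the paper phrases the final optimality step by contraposition while you argue it directly via the equality of achievable value profiles. Your explicit treatment of the identification of \rref{alg:mname} with learning in $E'$, and of the non-emptiness of the safe action set via Assumption~\ref{assm:safe}, makes precise some steps the paper leaves implicit, but does not change the approach.
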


\begin{proof}
We provide proof by contraposition. Let's assume that $\pi^*$ is not optimal in $E'$. Then there must exist $\pi'$ in $E'$ that gets more reward. But, by Lemma \ref{lem:isomorphic_policies}, $\pi'$ corresponds to a safe policy $\pi = g(\pi')$ in $E$ which gets the same amount of reward, so $\pi$ is better in $E$ than $\pi^*$. Hence, $\pi^*$ is not optimal among safe policies in $E$.
\end{proof}

A few notes regarding this theorem:
\begin{itemize}
    \item The intuitive approach to making an agent safe, if we know the set of safe actions in each state, might be to sample from the safe subset of the agent's policy distribution (after renormalization). Because this is not actually sampling from the distribution the agent learned, this may interfere with training the agent.
    \item While we keep $\mc S$ the same in $E$ and $E'$, there may be states which become unreachable in $E'$ because only unsafe transitions in $E$ lead to them. Thus the effective size of $E'$'s state space may be smaller which could speed up learning effective safe policies.
    \item Our approach can be viewed as transforming a constrained optimization problem (being safe in $E$; have to treat it as a CMDP) into an unconstrained one (being safe in $E'$).
\end{itemize}

\section{Object Detection Details}
\label{app:symbolicMapping}

\paragraph{CenterNet \citep{zhou2019objects_centernet}}

CenterNet-style object detectors take an image of size $H \times W \times C$ (height, width, and channels, respectively) as input and output an image $Z$ of size $\lfloor H / S \rfloor \times \lfloor W / S \rfloor \times (N + 2)$ where $S$ is a downscaling amount to make the detection more efficient and $N$ is the number of classes to detect. For the first $N$ channels, $z_{ijk}$ is the probability that the pixel $i, j$ is the (downscaled) center of an object of the $k$th class. The final two channels of $Z$ contain x and y offsets. The offsets account for the error in detecting locations in the original image because the predictions are downscaled: a downscaled detection at $i, j$ can be converted to a detection in the original image coordinates at $i', j'$ by setting $i' = i * S + o_1, j' = j * S + o_2$ where $o_1 = z_{ijN}, o_2 = z_{ij(N+1)}$. As the objects in our environments have constant sizes, we don't predict the object sizes as is done in CenterNet.

As in \cite{zhou2019objects_centernet}, we set $S = 4$ and use max-pooling and thresholding to convert from the probability maps to a list of detections. In particular, there is a detection for object class $k$ at location $i,j$ if $z_{ijk} \ge \tau$ and $z_{ijk} == maxpool(z_{ijk})$ where $maxpool$ is a 3x3 max-pooling operation centered at $i,j$ (with zero-padding). We set $\tau = 0.5$. The detector then returns a list of tuples $(k, i', j')$ containing the class id ($k$) and center point ($i', j'$) of each detection. These are used in evaluating the constraints wherever the formulas reference the location of an object of type $k$ (i.e. if a robot must avoid hazards, the constraint will be checked using the location of each hazard in the detections list).

We use ResNet-18 \citep{he2016deep_resnet} truncated to the end of the first residual block. The first layer is also modified to have only a single input channel because we use grayscale images, as is common for inputs to RL agents. This already outputs an image which is downscaled 4x relative to the input, so we do the centerpoint classification and offset prediction directly from this image, removing the need for upscaling. We use one 1x1 convolutional layer for the offset prediction (two output channels) and one for the center point classification ($N$ output channels and sigmoid activation).

\paragraph{Training}

To avoid introducing a dependency on heavy annotations, we restricted ourselves to a single image for each safety-relevant object in an environment and a background image. We produce images for training by pasting the objects into random locations in the background image. We also use other standard augmentations such as left-right flips and rotations. New images are generated for every batch.

We use the label-generation and loss function from \cite{zhou2019objects_centernet}. Labels for each object class are generated by evaluating, at each pixel position, a Gaussian density on the distance from that position to the center of the nearest object of the given class (see \rref{alg:label_creation} for details).

\begin{algorithm}
\caption{Label Creation}
\label{alg:label_creation}
\begin{algorithmic}
\STATE \textbf{Input} $xs, ys$: center $x, y$ positions for each object of a type; $h, w$: label image height, width
\STATE $Y \leftarrow 0_{h \times w}$ \text{  // an array of zeros of size $h \times w$}
\STATE $\Sigma \leftarrow \left[\begin{array}{cc}
    h / 2 & 0  \\
    0 & w / 2 
\end{array}\right]$\;
\FOR {$x \in xs, y \in ys$}
    \STATE $\mu \leftarrow [x, y]$\;
    \FOR{i \assign 1, 2, \dots, h}
      \FOR{j \assign 1, 2, \dots, w}
        \STATE \text{// $\phi_{\mu, \Sigma}$ is the probability density of a multivariate Gaussian parametrized by $\mu$ and $\Sigma$}
        \STATE $Y_{ij} \leftarrow \max(Y_{ij}, \phi_{\mu, \Sigma}([i, j]))$ \;
      \ENDFOR
    \ENDFOR
\ENDFOR
\STATE \textbf{return} Y\;
\end{algorithmic}
\end{algorithm}

The loss function is a focal loss: a variant of cross-entropy that focuses more on difficult examples (where the predicted probabilities are farther from the true probabilities) \citep{lin2017focal}. We use a modified focal loss as in \citep{law2018cornernet,zhou2019objects_centernet}:

\begin{align*}
    \frac{-1}{N} \sum_{i, j, k} \begin{cases} 
      (1 - \hat Y_{ijk})^\alpha \log(\hat Y_{ijk}) & \text{if } Y_{ijk} = 1 \\
      (1 - Y_{ijk})^\beta \hat Y_{ijk}^\alpha \log(1 - \hat Y_{ijk}) & \text{otherwise} 
  \end{cases}
\end{align*}

\noindent where $N$ is the number of objects in the image (of any type); $i \in [1, h]$; $j \in [1, w]$; $t \in [1, T]$; $w, h$ are the width and height of the image; and $T$ is the number of object classes. $\hat Y_{ijk}$ is the predicted probability of an object of type $t$ being centered at position $(x, y)$ in the image and $Y_{ijk}$ is the ``true'' probability. $\alpha, \beta$ are hyperparameters that we set to 2 and 4, respectively, as done by \citep{law2018cornernet,zhou2019objects_centernet}. We remove the division by $N$ if an image has no objects present. The loss for the offsets is mean-squared error, and we weight the focal loss and offset loss equally. We use the Adam optimizer \citep{kingma2014adam} with learning rate 0.000125, $beta_1 = 0.9$, $\beta_2 = 0.999$, as in \cite{zhou2019objects_centernet}. We decrease the learning rate by a factor of 10 if the loss on a validation set of 5,000 new images doesn't improve within 10 training epochs of 20,000 images. The batch size is 32 as in \cite{zhou2019objects_centernet}. We keep the model which had the best validation loss.

\section{Reinforcement Learning Details} \label{appendix:fullcode}

In all of our experiments, we use PPO as the reinforcement learning algorithm. Our hyperparameter settings are listed in Table \ref{tab:ppo_params}. We run several environments in parallel to increase training efficiency using the method and implementation from \citep{stooke2019rlpyt}.

We use grayscale images as inputs to the RL agent, and the CNN architecture from \cite{espeholt2018impala}.

\begin{table}[hb!]
    \centering
    \begin{tabular}{l|l}
        Hyperparameter & Value \\
        \midrule
        Adam learning rate & $0.001 \times \alpha$ \\
        Num. epochs & 4 \\
        Number of actors & 32 \\
        Horizon (T) & 64 \\
        Minibatch size & $2048 ~(=32 \times 64)$ \\
        Discount ($\gamma$) & 0.99 \\
        GAE parameter ($\lambda$) & 0.98 \\
        Clipping parameter & $0.1 \times \alpha$ \\
        Value function coeff. & 1 \\
        Entropy coeff. & 0.01 \\
        Gradient norm clip & 1 \\
    \end{tabular}
    \caption{Hyperparameters used for PPO in all experiments. $\alpha$ is linearly annealed from 1 at the start of each experiment to 0 at the end.}
    \label{tab:ppo_params}
\end{table}

\end{document}